\documentclass{article}

\usepackage[main,preprint]{neurips_2026} 

\usepackage[utf8]{inputenc} 
\usepackage[T1]{fontenc}    
\usepackage{hyperref}       
\usepackage{url}            
\usepackage{booktabs}       
\usepackage{amsfonts}       
\usepackage{nicefrac}       
\usepackage{microtype}      
\usepackage{xcolor}         
\usepackage{amsmath, amssymb}
\usepackage{graphicx}
\usepackage{enumitem}
\usepackage{multirow}
\usepackage{array}
\usepackage{amsthm}
\usepackage{longtable}
\usepackage{tabularx}
\newtheorem{proposition}{Proposition}

\theoremstyle{definition}

\theoremstyle{remark}

\title{\textit{Ex Ante} Evaluation of AI-Induced Idea Diversity Collapse}

\author{%
  Nafis Saami Azad \\
  Bellini College of Artificial Intelligence, Cybersecurity, and Computing\\
  University of South Florida\\
  Tampa, FL--33620, USA \\
  \texttt{nafisazad@usf.edu} \\
  \And
  Raiyan Abdul Baten\\
  Bellini College of Artificial Intelligence, Cybersecurity, and Computing\\
  University of South Florida\\
  Tampa, FL--33620, USA \\
  \texttt{rbaten@usf.edu} \\
}

\begin{document}

\maketitle

\begin{abstract}
Creative AI systems are typically evaluated at the level of individual utility, yet creative outputs are consumed in populations: an idea loses value when many others produce similar ones. This creates an evaluation blind spot, as AI can improve individual outputs while increasing population-level crowding. We introduce a human-relative framework for benchmarking AI-induced human diversity collapse without requiring human--AI interaction data, providing an \textit{ex ante} protocol to estimate crowding risk from model-only generations and matched unaided human baselines. By modeling ideas as congestible resources, we show that source-level crowding is identifiable from within-distribution comparisons, yielding an excess-crowding coefficient $\Delta$ and a human-relative diversity ratio $\rho$. We show that $\rho\ge1$ is the no-excess-crowding parity condition and connect $\Delta$ to an adoption game with exposure-dependent redundancy costs. Across short stories, marketing slogans, and alternative-uses tasks, three frontier LLMs fall below parity across crowding kernels. Estimates stabilize with feasible model-only sample sizes. Importantly, generation-protocol variants show that crowding can be reduced through targeted design, making diversity collapse an actionable, development-time evaluation target for population-aware creative AI.
\end{abstract}

\section{Introduction} 
Generative AI systems are increasingly used to support human creativity, from writing and design to scientific ideation. Their effectiveness is typically evaluated at the level of \textit{individual} utility: a model is deemed successful if it helps a person write better stories, generate more ideas, or produce more compelling artifacts. However, in creative domains, outputs are evaluated and consumed in \textit{populations}, and the value of a creative product depends not only on its intrinsic quality, but also on its uniqueness relative to other products in the population~\cite{amabile1982social,acar2014assessing,bangash2025musescorer}. When many users rely on the same generative model for inspiration, their final outputs can become correlated, leading to \textit{AI-induced human diversity collapse}: a reduction in the collective diversity of human-generated ideas~\cite{doshi2024generative,anderson2024homogenization,kumar2025human}. This creates an evaluation blind spot that individual-level approaches do not capture: a system can improve each user's expected output while making the population of human outputs more redundant.

Current empirical evidence on AI-induced diversity collapse relies on \textit{post hoc evaluation}: researchers collect human--AI co-created outputs and measure idea diversity after the fact~\cite{doshi2024generative,anderson2024homogenization,kumar2025human}. Such studies are essential for documenting deployed effects, but this evaluation paradigm is poorly suited as a development-time, population-aware benchmark. It requires expensive, time-intensive human-subject experiments for each model, task, and interface condition; it does not provide a model-side quantity that can be monitored, compared, or optimized before deployment; and it leaves users without a clear way to reason about whether one model may help their final ideas stand out more than another.

To close this gap, we introduce a human-relative evaluation framework for benchmarking AI-induced human diversity collapse without requiring human--AI interaction data. The framework formalizes the population-level claim being evaluated, the proper counterfactual to contextualize it, and the evidence needed to estimate it. It uses two observable ingredients: model-only generations from a fixed model-condition and matched unaided human baselines from the same task condition. 

Our approach is based on two key observations. \textit{First}, ideas are not ordinary goods when they function as \textit{sources of inspiration}. Inspiration sources behave like shared, congestible resources: \textit{many individuals cannot draw from the same idea without diminishing its value}, because repeated use increases overlap in downstream idea-space~\cite{baten2020creativity,baten2024ai,baten2022novel,kelty2025innovation,ahmed2026semantic}. This perspective gives rise to a population game in which generative models act as shared source distributions that shape the human outputs users ultimately produce. \textit{Second}, crowding in idea-space must be contextualized to be meaningful. Even without AI, independently generated human ideas exhibit a task-specific level of overlap~\cite{acar2014assessing,forthmann2020scrutinizing,bangash2025musescorer}. We therefore use human crowding as the baseline and measure the extent to which model-only generations introduce \textit{excess crowding} beyond ordinary human convergence.

A central consequence is that population-level crowding effects can be estimated from \textit{within-distribution} comparisons at the source: human--human crowding estimates the baseline level of task-specific convergence, while model-only crowding estimates the concentration induced by repeated draws from the same model-condition. Their comparison yields an excess-crowding coefficient that links source-level crowding to the population-level adoption game, while the human-relative ratio defines a parity condition: a model-condition introduces no excess crowding exactly when its outputs are no more crowded than the matched human baseline. Since this coefficient drives the population-level redundancy cost, it can be estimated before human--AI outputs are collected. This enables \textit{ex ante} evaluation of AI-induced human diversity collapse: before running a new human--AI study, developers can use model-only sampling to estimate a model-condition's population-level crowding risk. The resulting quantities also admit a decision-theoretic interpretation, characterizing when AI adoption is individually rational as a function of expected crowding, value of distinctiveness, and beliefs about others' behavior.

We instantiate the framework on three creative task families---short stories, marketing slogans, and alternative uses of common objects---and evaluate three frontier LLMs under matched task instructions. Across the primary crowding kernel, all evaluated neutral model-conditions fall below human-relative parity, indicating positive excess crowding relative to matched unaided human baselines. We then stress-test the framework along three dimensions. First, rarefaction diagnostics show that the estimated pairwise crowding coefficients stabilize with feasible model-only sample sizes, supporting the practicality of development-time evaluation. Second, task-specific crowding kernels show that the results are not artifacts of a single representation: story crowding persists under plot-synopsis similarity, slogan crowding persists under lexical-template overlap, and alternative-uses crowding persists under concept-bucket co-membership. Third, generation-protocol variants show that crowding is not fixed by the base model alone: interventions such as temperature scaling and persona-mixture prompting can move model-conditions toward human-relative parity. Together, these results reframe diversity collapse from a post hoc diagnosis into an actionable evaluation target. For AI developers, the framework provides a way to audit and reduce population-level crowding before deployment; for users and downstream evaluators, it clarifies why the value of AI assistance depends not only on private quality gains, but also on how many others draw from the same generative source.

\section{Related Work}
\subsection{Post Hoc Evaluation of AI-Assisted Human Creativity}
The most direct evidence for AI-induced human diversity collapse comes from studies of AI-assisted creativity. In these experiments, participants generate stories, ideas, designs, or other artifacts with or without AI assistance, and researchers evaluate the resulting human outputs. This paradigm has shown that AI assistance can improve individual-level outcomes while narrowing the distribution of outputs across users~\cite{anderson2024homogenization,doshi2024generative}. Similar human--AI co-creation studies evaluate whether AI-generated suggestions, feedback, or writing support change the novelty or usefulness of human creative products~\cite{chakrabarty2024art,kumar2025human,ashkinaze2025ai,sourati2026homogenizing}. Methodologically, these studies use combinations of human ratings, expert judgments, semantic similarity, clustering, lexical overlap, and idea-level diversity measures to compare human-only and AI-assisted output distributions~\cite{anderson2024homogenization,doshi2024generative,chakrabarty2024art,kumar2025human}. This paradigm is essential because it observes the realized human outputs in a particular task, model, interface, and interaction protocol. Its limitation is that it is expensive, system-specific, and retrospective: each new model-condition requires new human--AI interaction data before its population-level effect can be assessed.

The measurement tools used in these studies draw on a long tradition of human creativity assessment in psychology. Human responses to common divergent-thinking tasks, such as the Alternative Uses Task (AUT), are typically evaluated using three complementary approaches~\cite{beketayev2016scoring,runco2012standard,dumas2014understanding,runco1992scoring}. \textit{Idea-intrinsic} measures assess properties of a response or response set, such as semantic distance from a prompt, within-person semantic diversity, or elaboration~\cite{guilford1967nature,torrance1966nurture,runco2012standard,beaty2021automating,snyder2004creativity,buczak2023machines}. \textit{Social-rarity} measures score an idea by its infrequency in a reference population, often requiring rephrasings of the same underlying concept to be grouped into shared buckets~\cite{acar2014assessing,forthmann2020scrutinizing,reiter2019scoring,buczak2023machines,baten2021cues,olson2021naming,deyoung2008cognitive}. \textit{Subjective-rating} approaches, including the Consensual Assessment Technique, ask human judges to rate novelty, usefulness, or quality directly~\cite{baten2020creativity,amabile1982social}. Recent computational work has scaled these traditions using embeddings, clustering, supervised scoring, and retrieval-assisted LLM bucketing~\cite{beaty2021automating,organisciak2023beyond,organisciak2020open,bangash2025musescorer,snyder2004creativity,bossomaier2009semantic,dumas2021measuring,stevenson2020automated}. This literature provides mature ways to measure creative quality and population-relative distinctiveness once outputs are observed; the open challenge is how to convert such measurements into development-time benchmarks for AI systems before the corresponding human--AI outputs exist.

\subsection{From Model-Output Diversity to Human-Relative Evaluation}
A parallel literature evaluates the creative outputs of generative models themselves~\cite{smith2025comprehensive,wenger2025we}. Work on story generation, divergent-thinking tasks, and model-based creative ideation commonly measures model outputs using n-gram diversity, pairwise semantic distance, novelty, surprise, lexical or syntactic complexity, association-based metrics, and human or LLM judgments~\cite{zhang2025noveltybench,ismayilzada2024evaluating,lin2024evaluating,lin2023llm,chakrabarty2026can,lu2024ai}. Recent benchmark and survey work emphasizes that LLM creativity evaluation remains fragmented across tasks and metrics, with separate measures for quality, novelty, diversity, and domain-specific creative behavior~\cite{qiu2025deep,hou2025creativityprism,lin2025creativity}. These evaluations often show that model outputs can be fluent, coherent, and high quality while still converging on repeated themes, templates, or semantic regions~\cite{xu2025echoes,ismayilzada2024evaluating,hou2025creativityprism,lin2025creativity,zhang2025noveltybench,shypula2025evaluating}. Such convergence is expected from a generation perspective: LLMs sample from prompt-conditioned distributions shaped by shared training data, post-training objectives, decoding parameters, and prompting protocols~\cite{shypula2025evaluating,holtzman2019curious,kirk2023understanding,nguyen2024turning,yun2025price}. When many generations are elicited from similar prompts under similar settings, repeated tropes, phrasings, and high-probability continuations are natural unless the generation process is explicitly diversified~\cite{li2025jointly,wan2026diverse}. However, ordinary AI-output diversity metrics do not provide the human counterfactual needed to contextualize AI-induced human diversity collapse. They can say whether model outputs are more or less diverse, but not whether shared model use adds excess crowding beyond ordinary human convergence, or whether that excess crowding implies an adoption-dependent, population-level redundancy cost. Our framework addresses this gap directly.

\section{A Human-Relative Theory of Idea-Space Crowding}
\label{sec:theory}
\subsection{Human-Relative Crowding}
Consider a creative task condition $k$ with output space $\mathcal{Y}_k$. Let $H_k$ denote the distribution of unaided human outputs over $\mathcal{Y}_k$. Likewise, let $A_{m,k}$ denote the distribution induced by model $m$ under a fixed generation protocol. We use \textit{model-only sampling} to refer to repeated independent generations from this same model-condition, without human interaction. Let $K_k:\mathcal{Y}_k\times\mathcal{Y}_k\to[0,1]$ be a crowding kernel, where larger values indicate that two outputs occupy a more similar region of idea-space. The kernel is task-dependent: it may encode semantic similarity, shared plot structure, lexical-template overlap, or co-membership in the same concept bucket, as appropriate. The formal theory requires only that the same $K_k$ be applied to both human and model-only data, be bounded in $[0,1]$, and be interpretable as pairwise crowding.

We define unaided human crowding and model-only crowding for task condition $k$ as
\begin{align}
\kappa^H_k &= \mathbb{E}_{h,h'\sim H_k}[K_k(h,h')],
&
\kappa^A_{m,k} &= \mathbb{E}_{a,a'\sim A_{m,k}}[K_k(a,a')],
\label{eq:kappas}
\end{align}
where both expectations are over independent draws. The excess crowding coefficient is
\begin{equation}
\Delta_{m,k}=\max\{0,\kappa^A_{m,k}-\kappa^H_k\}.
\label{eq:delta}
\end{equation}
This coefficient measures the excess concentration of the model-conditioned source distribution relative to the unaided human counterfactual. We also define the human-relative diversity ratio
\begin{equation}
\rho_{m,k}=\frac{1-\kappa^A_{m,k}}{1-\kappa^H_k},
\qquad \kappa^H_k<1.
\label{eq:rho}
\end{equation}
The ratio normalizes model-only diversity by the matched human baseline for the given task condition $k$. This normalization is necessary because creative tasks differ in their baseline level of convergence: humans also reuse tropes, affordances, and templates. A human-relative counterfactual therefore prevents intrinsically constrained tasks from being mistaken for model-specific concentration.

Importantly, both quantities are identifiable from within-distribution samples: pairs of unaided human outputs estimate \(\kappa^H_k\), and pairs of model-only generations estimate \(\kappa^A_{m,k}\). These matched human-only and model-only samples are therefore sufficient to estimate excess crowding at the source-distribution level, without observing realized human--AI interactions. The next subsection shows how this source-level quantity enters a population adoption game.
\begin{proposition}[Human-relative parity is the no-externality condition]
\label{prop:parity}
If $\kappa^H_k<1$, then
\begin{equation}
\Delta_{m,k}=0
\quad\Longleftrightarrow\quad
\rho_{m,k}\ge 1.
\label{eq:parity_equivalence}
\end{equation}
\end{proposition}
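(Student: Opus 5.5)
The plan is to reduce both sides of \eqref{eq:parity_equivalence} to the single inequality $\kappa^A_{m,k}\le\kappa^H_k$ and then chain the equivalences. Only the definitions \eqref{eq:delta} and \eqref{eq:rho} are used, together with the hypothesis $\kappa^H_k<1$ that makes $\rho_{m,k}$ well defined.

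\emph{Left-hand side.} By \eqref{eq:delta}, $\Delta_{m,k}=\max\{0,\kappa^A_{m,k}-\kappa^H_k\}$. A maximum of $0$ and a real number $x$ equals $0$ exactly when $x\le 0$. Hence
\[
\Delta_{m,k}=0 \iff \kappa^A_{m,k}-\kappa^H_k\le 0 \iff \kappa^A_{m,k}\le\kappa^H_k .
\]

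\emph{Right-hand side.} Since $\kappa^H_k<1$, the denominator $1-\kappa^H_k$ in \eqref{eq:rho} is strictly positive. Multiplying the inequality $\rho_{m,k}\ge 1$ by this positive quantity preserves its direction, so it is equivalent to $1-\kappa^A_{m,k}\ge 1-\kappa^H_k$. This in turn is equivalent to $\kappa^A_{m,k}\le\kappa^H_k$.

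Combining the two displays gives the claimed equivalence. Every step is a two-sided algebraic equivalence, so both directions follow at once.

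The only point needing care is the sign of the denominator, and the hypothesis $\kappa^H_k<1$ settles it. Boundedness of $K_k$ in $[0,1]$ guarantees that $\kappa^A_{m,k},\kappa^H_k\in[0,1]$ are finite expectations, but the argument does not otherwise need it. Nothing probabilistic beyond the existence of these expectations is required.

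The statement also interprets parity as the ``no-externality'' condition. I would note that this label rests on the later adoption game, where the redundancy cost scales with $\Delta_{m,k}$. The equivalence proved here is purely definitional and does not depend on that game.
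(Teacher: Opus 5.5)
Your proposal is correct and follows the paper's proof essentially line for line. Both reduce $\Delta_{m,k}=0$ and $\rho_{m,k}\ge 1$ to the single inequality $\kappa^A_{m,k}\le\kappa^H_k$, using $\kappa^H_k<1$ to make the denominator of $\rho_{m,k}$ positive so the direction of the inequality is preserved.
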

This proposition makes $\rho=1$ a benchmark threshold: it is exactly the point at which the model-conditioned source distribution introduces no excess crowding beyond ordinary human convergence under the same task constraints. Proofs of all propositions are provided in Appendix~\ref{app:proofs}.

\subsection{Mean-field Approximation of a Congestion Game: From Crowding to Adoption}
The coefficient $\Delta_{m,k}$ is a pairwise property of a model-task distribution, representing the per-pair excess-crowding hazard. Population cost depends on how many other creators draw from that same distribution. Let $X_{-i}$ denote the number of other creators using model $m$ in task condition $k$. Each additional adopter creates another opportunity for the focal creator's output to overlap with others.

Under an independent-exposure approximation, the probability of no excess redundancy encounter is approximately $\exp\{-X_{-i}\Delta_{m,k}\}$. The probability of at least one excess encounter is therefore $1-\exp\{-X_{-i}\Delta_{m,k}\}$. Multiplying by the value of distinctiveness in task $k$, $\gamma_k\ge0$, we get the redundancy cost
\begin{equation}
C_{m,k}(X_{-i})
=
\gamma_k
\left(
1-\exp\{-X_{-i}\Delta_{m,k}\}
\right).
\label{eq:redundancy_cost}
\end{equation}
The cost saturates because once an output is already non-distinct, additional overlaps add less marginal damage. This is not the only possible choice for the redundancy cost function; we use the exponential form as a monotone saturating exposure model that cleanly separates the pairwise excess crowding $\Delta_{m,k}$ from population exposure $X_{-i}$.

Let $R_{i,k}$ be creator $i$'s expected payoff from unaided production, and let $Q_{i,m,k}$ be creator $i$'s expected payoff from using model $m$ before redundancy costs. The private AI advantage is
\begin{equation}
B_{i,m,k}=Q_{i,m,k}-R_{i,k}.
\label{eq:private_advantage}
\end{equation}
A rational creator adopts AI when
\begin{equation}
B_{i,m,k}>C_{m,k}(X_{-i}).
\label{eq:adoption_condition}
\end{equation}
The human-relative diversity ratio $\rho_{m,k}$ affects this decision through the excess-crowding coefficient $\Delta_{m,k}$, which determines the redundancy cost. Combining Eqs.~\ref{eq:delta} and~\ref{eq:rho} gives
\begin{equation}
\Delta_{m,k}
=
\max\{0,(1-\rho_{m,k})(1-\kappa^H_k)\}.
\label{eq:delta_from_rho}
\end{equation}
Thus, $\rho_{m,k}\ge1$ implies $\Delta_{m,k}=0$, while lower values of $\rho_{m,k}<1$ imply larger excess crowding. Therefore, lower human-relative diversity raises the redundancy cost $C_{m,k}(X_{-i})$ and increases the private benefit $B_{i,m,k}$ required for rational adoption.

\begin{proposition}[Critical benefit threshold]
\label{prop:critical_benefit}
For exposure level $X_{-i}$, using model $m$ is rational iff
\begin{equation}
B_{i,m,k}>
B^{\mathrm{crit}}_{m,k}(X_{-i})
=
\gamma_k\left(1-\exp\{-X_{-i}\Delta_{m,k}\}\right).
\label{eq:bcrit}
\end{equation}
For $\rho_{m,k}<1$, this threshold is decreasing in $\rho_{m,k}$ and increasing in $X_{-i}$, $\gamma_k$, and $\Delta_{m,k}$.
\end{proposition}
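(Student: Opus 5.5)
The plan has two parts. First, the ``iff'' characterization comes straight from the adoption rule. Second, the monotonicity claims follow from differentiating the threshold, with a substitution via Eq.~\ref{eq:delta_from_rho} for the $\rho$-dependence.

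For the characterization, I take the adoption condition in Eq.~\ref{eq:adoption_condition} as the definition of individual rationality. Adoption is rational exactly when $B_{i,m,k}>C_{m,k}(X_{-i})$. Substituting the redundancy cost of Eq.~\ref{eq:redundancy_cost} gives Eq.~\ref{eq:bcrit} with $B^{\mathrm{crit}}_{m,k}(X_{-i})=C_{m,k}(X_{-i})$, so this part is definitional.

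For the comparative statics, I treat $B^{\mathrm{crit}}=\gamma_k(1-e^{-X_{-i}\Delta_{m,k}})$ as a function of $(\gamma_k,X_{-i},\Delta_{m,k})$ and compute the partial derivatives:
\[
\partial_{\gamma_k}B^{\mathrm{crit}}=1-e^{-X_{-i}\Delta_{m,k}},\qquad
\partial_{X_{-i}}B^{\mathrm{crit}}=\gamma_k\Delta_{m,k}e^{-X_{-i}\Delta_{m,k}},\qquad
\partial_{\Delta_{m,k}}B^{\mathrm{crit}}=\gamma_k X_{-i}e^{-X_{-i}\Delta_{m,k}}.
\]
All three are nonnegative because $\gamma_k\ge0$, $X_{-i}\ge0$ and $\Delta_{m,k}\ge0$. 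When $X_{-i}$ is viewed as an integer count, I argue monotonicity directly from $t\mapsto 1-e^{-t}$ being increasing, rather than by differentiating.

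For the $\rho$-dependence, I use the hypothesis $\rho_{m,k}<1$ together with $\kappa^H_k<1$ (needed for $\rho$ to be defined). Then Eq.~\ref{eq:delta_from_rho} gives $\Delta_{m,k}=(1-\rho_{m,k})(1-\kappa^H_k)>0$, with the $\max$ inactive. This map is strictly decreasing in $\rho_{m,k}$ for fixed $\kappa^H_k$, since its derivative is $-(1-\kappa^H_k)<0$. Composing it with the map $\Delta\mapsto B^{\mathrm{crit}}$, which is nondecreasing (and strictly increasing when $\gamma_kX_{-i}>0$), shows that the threshold is decreasing in $\rho_{m,k}$. On the region $\rho\ge1$, Proposition~\ref{prop:parity} gives $\Delta=0$, so the threshold is flat there. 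This is why the statement restricts to $\rho<1$.

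The main obstacle is stating exactly what kind of monotonicity holds. The paper's ``increasing'' can only mean \emph{weakly} increasing in degenerate cases: if $\gamma_k=0$ the threshold is identically zero, and if $X_{-i}=0$ or $\Delta_{m,k}=0$ it is flat in some arguments. I will state strict monotonicity under the nondegeneracy conditions $\gamma_k>0$, $X_{-i}>0$ and $\rho_{m,k}<1$ (so that $\Delta_{m,k}>0$), and weak monotonicity otherwise. I will also note that the comparative static in $\rho$ holds with $\kappa^H_k$ fixed, since $\rho$ and $\Delta$ are linked through $\kappa^H_k$.
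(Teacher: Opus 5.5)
Your proposal is correct and follows essentially the same route as the paper: the equivalence comes from substituting the redundancy cost into the adoption rule $B_{i,m,k}>C_{m,k}(X_{-i})$, monotonicity in $\gamma_k$, $X_{-i}$, and $\Delta_{m,k}$ follows from the nonnegative partial derivatives, and the $\rho$-dependence follows from the chain rule through $\Delta_{m,k}=(1-\rho_{m,k})(1-\kappa^H_k)$, whose derivative is $-(1-\kappa^H_k)<0$. Your extra care about weak versus strict monotonicity (strict when $\gamma_k>0$, $X_{-i}>0$, and $\rho_{m,k}<1$) and about holding $\kappa^H_k$ fixed makes explicit what the paper leaves implicit, since its proof only shows that the relevant derivatives have the right sign (nonnegative, or nonpositive in $\rho_{m,k}$).
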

The implication is intuitive: a below-parity model can still be rational to use when exposure is low, distinctiveness is unimportant, or private benefit is large. But in a high-adoption creative market, low $\rho$ imposes a larger distinctiveness discount.

\subsection{Population Exposure}
Now consider a population of $N$ creators facing the same task condition $k$. For a focal creator $i$, suppose each of the other $N-1$ creators independently adopts the same model-condition with probability $p$. Then the number of other adopters is
$X_{-i}\sim \mathrm{Binomial}(N-1,p)$. Taking the expectation of Eq.~\ref{eq:redundancy_cost} over this adoption process gives
\begin{equation}
\mathbb{E}[C_{m,k}]
=
\gamma_k
\left[
1-
\left(
1-p+p\exp\{-\Delta_{m,k}\}
\right)^{N-1}
\right].
\label{eq:expected_cost}
\end{equation}
The derivation follows by taking the binomial expectation over $X_{-i}$; see Appendix~\ref{app:proofs}. This expression separates the empirically benchmarked model property, $\Delta_{m,k}$, from the population context, $p$ and $N$, and the task value of distinctiveness, $\gamma_k$. Thus, the same model can impose different expected redundancy costs depending on adoption prevalence and market size. 

The limiting case of mass adoption clarifies why the human-parity boundary matters.
\begin{proposition}[Mass-adoption limit]
\label{prop:mass_limit}
For the realized-exposure cost in Eq.~\ref{eq:redundancy_cost},
\begin{equation}
\lim_{X_{-i}\to\infty}C_{m,k}(X_{-i})
=
\begin{cases}
0, & \rho_{m,k}\ge1,\\
\gamma_k, & \rho_{m,k}<1.
\end{cases}
\label{eq:mass_limit}
\end{equation}
\end{proposition}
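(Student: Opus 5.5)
The plan is to reduce the limit to the sign of $\Delta_{m,k}$ using Proposition~\ref{prop:parity}, and then take the limit of the exponential in each case. Throughout, $\kappa^H_k<1$ is taken as standing (it is needed for $\rho_{m,k}$ to be defined). The pair $(m,k)$ is fixed, so $\gamma_k$ and $\Delta_{m,k}$ are constants. The cost $C_{m,k}(X_{-i})$ is viewed as a function of the exposure $X_{-i}$ alone, with $X_{-i}$ ranging over nonnegative integers or reals; the limit is the same either way.

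\emph{Case $\rho_{m,k}\ge 1$.} Proposition~\ref{prop:parity} gives $\Delta_{m,k}=0$; equivalently, Eq.~\ref{eq:delta_from_rho} gives $\max\{0,(1-\rho_{m,k})(1-\kappa^H_k)\}=0$ because the product is nonpositive. Then $\exp\{-X_{-i}\Delta_{m,k}\}=1$ for every $X_{-i}$, so $C_{m,k}(X_{-i})=\gamma_k(1-1)=0$ identically. The limit is therefore $0$.

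\emph{Case $\rho_{m,k}<1$.} By Proposition~\ref{prop:parity}, $\Delta_{m,k}\neq 0$. Since $\Delta_{m,k}$ is a maximum with $0$, it is nonnegative, so $\Delta_{m,k}>0$. Explicitly, Eq.~\ref{eq:delta_from_rho} gives $\Delta_{m,k}=(1-\rho_{m,k})(1-\kappa^H_k)$, a product of two strictly positive factors. Hence $X_{-i}\Delta_{m,k}\to\infty$ and $\exp\{-X_{-i}\Delta_{m,k}\}\to 0$. By continuity of $t\mapsto\gamma_k(1-t)$, $C_{m,k}(X_{-i})\to\gamma_k$. It is worth adding that for $\gamma_k=0$ both branches coincide at $0$, which is consistent with the statement.

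There is no real obstacle here. The only point needing care is the strict positivity of $\Delta_{m,k}$ when $\rho_{m,k}<1$, since the saturation to $\gamma_k$ fails if $\Delta_{m,k}=0$. This is exactly where the hypothesis $\kappa^H_k<1$ and Proposition~\ref{prop:parity} enter: they rule out a zero excess coefficient below parity. I would state this step explicitly rather than leave it implicit in the $\max$.
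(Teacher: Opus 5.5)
Your proof is correct and takes essentially the same approach as the paper's. For $\rho_{m,k}\ge1$, Proposition~\ref{prop:parity} gives $\Delta_{m,k}=0$, so $C_{m,k}\equiv 0$; for $\rho_{m,k}<1$, Eq.~\ref{eq:delta_from_rho} gives $\Delta_{m,k}=(1-\rho_{m,k})(1-\kappa^H_k)>0$, so the exponential vanishes and the cost saturates at $\gamma_k$. Your explicit statement of the standing assumption $\kappa^H_k<1$ is slightly more careful than the paper, which uses it only implicitly.
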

If $\rho_{m,k}\ge1$, shared model use adds no excess crowding cost at any exposure level. If $\rho_{m,k}<1$, source-level excess crowding compounds with adoption, reaching the full distinctiveness penalty $\gamma_k$ in the mass-adoption limit. Thus, although $\rho$ is estimated from matched human-only and model-only data, it has a population-level interpretation: it determines whether shared model use creates an excess crowding externality in human outputs, making human-relative parity a meaningful benchmark target.

\section{Benchmark Design}
\label{sec:benchmark}
The theory identifies $\Delta_{m,k}$ as the pairwise excess-crowding coefficient that enters the adoption game. Empirically, the benchmark estimates $\kappa^H_k$, $\kappa^A_{m,k}$, $\Delta_{m,k}$, and $\rho_{m,k}$ from matched human and model-generated samples. Rather than construct a general-purpose model leaderboard, we instantiate a \textit{task-relative} evaluation procedure using a human baseline, a model-conditioned sample, and a task-appropriate crowding kernel. Accordingly, we evaluate three task families that differ in output structure, data source, and relevant notions of crowding: short stories, alternative uses of common objects, and smartphone marketing slogans. These tasks serve three purposes: (i) to test whether the benchmark recovers idea-space narrowing relative to humans, (ii) to assess whether pairwise crowding coefficients are stable with feasible model-only sample sizes, and (iii) to examine whether human-relative crowding is sensitive to the model-conditioned generation process, using temperature and persona-mixture prompting as illustrative deployment variants.

\textbf{Tasks and human baselines.}
For short, prompted creative writing, we sample three compact-fiction prompts from the \texttt{WritingPrompts} corpus~\cite{fan2018hierarchical}: a short horror story, a 100-word parachute-failure story, and a microfiction prompt contrasting a character's life history with their final seconds. These prompts provide 87 human-contributed stories, one per author. For the alternative uses task (AUT)~\cite{guilford1967nature}, we use the \texttt{socialmuse} dataset~\cite{baten2024ai}, with 109 human contributors generating 3,047 unaided ideas across five common objects while excluding each object's primary use. For slogans, we conducted an IRB-approved study with 95 human contributors who generated 659 creative marketing slogans for a new smartphone, of which 650 are unique. Each story prompt, AUT object, or product slogan context is treated as a task condition $k$; human-relative diversity is estimated within condition before equal-weight aggregation. Dataset details and human instructions are provided in Appendix~\ref{app:human_baselines}.

\textbf{Models and protocols.}
Model generations are collected from GPT-5.4, Claude Sonnet 4.5, and Gemini 2.5 Flash under matched task instructions. The main benchmark uses neutral prompting at temperature $T=1.0$ with 50 model-only generations per task condition. Deployment variants include temperature sweeps and a persona-mixture protocol crossing a $2^5$ grid of Big Five binary personality dimensions~\cite{soto2017short}. The persona mixture induces heterogeneous generation contexts while holding the task fixed. Exact prompts and implementation details are provided in Appendix~\ref{app:model_prompts}.

\textbf{Crowding kernels.}
Our primary kernel is semantic crowding. Let $f(x)$ be a normalized sentence embedding of output $x$. We define
\begin{equation}
K^{\mathrm{sem}}_k(x,y)
=
\frac{1+\cos(f(x),f(y))}{2}.
\label{eq:semantic_kernel}
\end{equation}
This maps embedding similarity to $[0,1]$ and is applied across all task families. Because crowding can occur at different representational levels, we also evaluate task-specific kernels: plot-synopsis crowding for narrative convergence in stories, concept-bucket crowding for repeated underlying uses in AUT responses, and lexical-template overlap for repeated wording or phrase structure in slogans. These kernels test whether the benchmark conclusions persist under domain-sensitive definitions of idea-space overlap; implementation details are provided in Appendix~\ref{app:kernels}.

\textbf{Matched estimation.}
For each model $m$, task condition $k$, and kernel $K_k$, we estimate human and model-only crowding using matched-sample bootstrapping. Let $n^H_k$ be the number of human sampling units (authors for stories, participants for AUT, and slogans) and $n^A_{m,k}$ the number of model-only generations. Each bootstrap replicate samples $b_{m,k}=\min\{n^H_k,n^A_{m,k}\}$ human units and model generations with replacement, then computes the mean off-diagonal pairwise crowding:
\begin{equation}
\widehat{\kappa}^{H}_{k}
=
\frac{1}{b_{m,k}(b_{m,k}-1)}
\sum_{i\ne j} K_k(h_i,h_j),
\qquad
\widehat{\kappa}^{A}_{m,k}
=
\frac{1}{b_{m,k}(b_{m,k}-1)}
\sum_{i\ne j} K_k(a_i,a_j).
\label{eq:sample_kappas}
\end{equation}
Within each task condition, we compute $\widehat{\Delta}_{m,k}$ and $\widehat{\rho}_{m,k}$ using Eqs.~\ref{eq:delta} and~\ref{eq:rho}; task-family estimates are then obtained by equal-weight aggregation across conditions. Confidence intervals are percentile bootstrap intervals. For AUT and slogans, where human participants may contribute multiple responses, we use participant-aware sampling: sample participants first, then sample one response per selected participant. This prevents high-fluency participants from dominating the human baseline.

\textbf{Aggregation and diagnostics.}
All quantities are estimated within a task-condition and then averaged equally across conditions within a task family, so no prompt dominates because it has more human responses. We assess finite-sample stability using rarefaction curves over model-only sample size.

\begin{figure}[t]
\centering
\includegraphics[width=\linewidth]{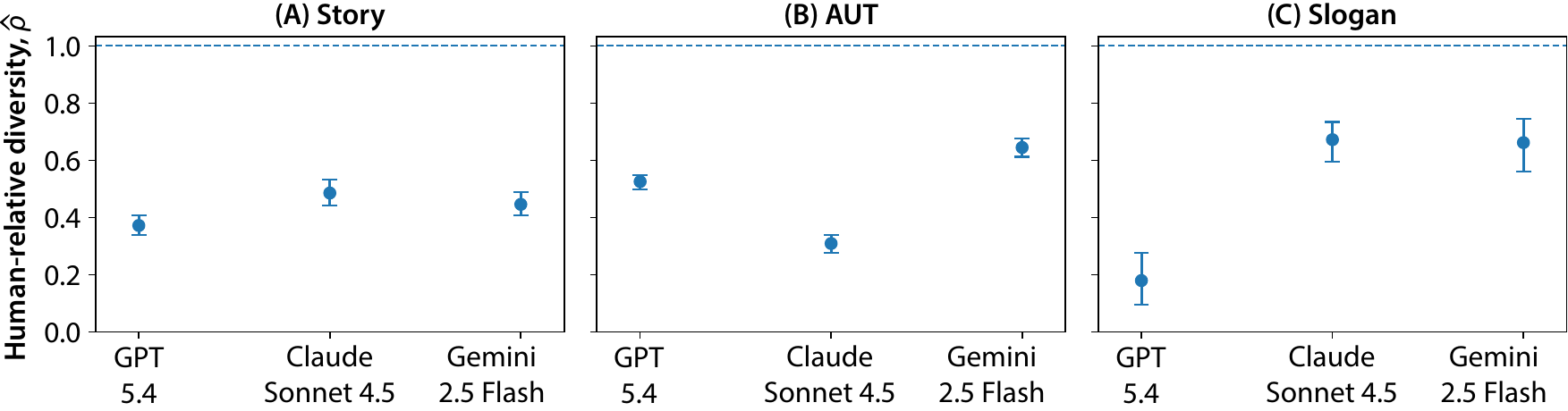}
\caption{
\textbf{Human-relative diversity under the primary semantic kernel.}
Points show task-family estimates of $\widehat{\rho}$ for each model; bars show 95\% bootstrap intervals. The dashed line marks $\rho=1$, the no-excess-crowding condition from Proposition~\ref{prop:parity}. Full numeric estimates are reported in Appendix~\ref{app:main_semantic_results}.
}
\label{fig:main_rho}
\end{figure}
\section{Results}
\label{sec:results}
\subsection{Neutral model-conditions fall below human-relative diversity parity}
\label{subsec:main_results}
Figure~\ref{fig:main_rho} reports the main semantic benchmark under neutral prompting and the primary semantic kernel. The dashed line marks $\rho=1$, the no-excess-crowding boundary from Proposition~\ref{prop:parity}. All nine neutral model-task combinations fall below this boundary, and in every case, the upper 95\% bootstrap confidence bound is also below one. Thus, under the primary semantic kernel, all evaluated neutral model-conditioned source distributions introduce positive excess crowding relative to the matched human baseline.

The estimates are interpretable as human-relative deficits rather than model-only diversity scores. Their magnitudes vary across task and model conditions, consistent with the task-relative design of the benchmark. For example, the slogan condition shows a large deficit for GPT-5.4, with $\widehat{\rho}=0.179$ and $\widehat{\Delta}=0.331$, while AUT shows a large deficit for Claude Sonnet 4.5, with $\widehat{\rho}=0.309$ and $\widehat{\Delta}=0.275$. These examples illustrate that excess crowding is not a single global property of a base model; it is estimated for a specific model-condition, task condition, and crowding kernel. Full numeric estimates are reported in Appendix~\ref{app:main_semantic_results}.

\subsection{Crowding estimates stabilize with feasible model-only sample sizes}
\label{subsec:stability_results}
A development-time benchmark is useful only if its estimates stabilize with feasible model-only sample sizes. We therefore use rarefaction to assess whether the pairwise crowding estimates underlying $\widehat{\Delta}_{m,k}$ are sensitive to the number of model generations sampled. This diagnostic concerns estimation rather than deployment scale: $n$ is the number of model-only samples used to estimate the source-level crowding coefficient, whereas $X_{-i}$ in the adoption game is the number of other creators using the same model-condition. Stability in $n$ therefore supports reliable estimation of $\widehat{\Delta}_{m,k}$; it does not imply that population-level redundancy stops growing with adoption.

\begin{figure}[t]
\centering
\includegraphics[width=\linewidth]{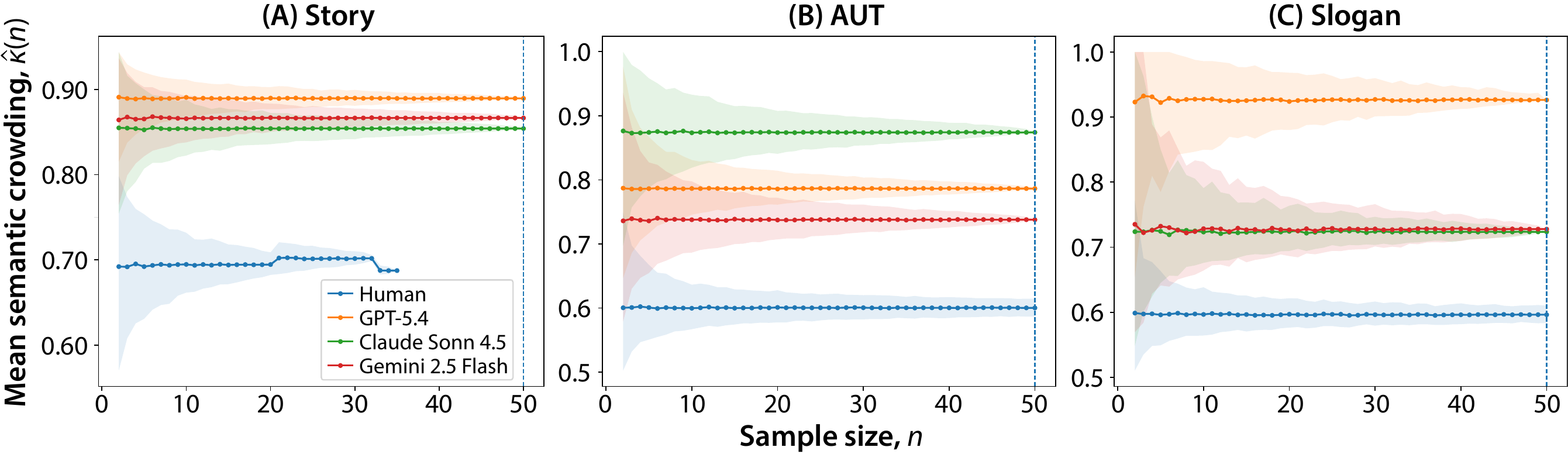}
\caption{
\textbf{Finite-sample stability of semantic crowding estimates.}
Curves show $\widehat{\kappa}(n)$ as a function of sampled responses $n$, averaged across conditions within each task family. Shaded bands show 95\% intervals from repeated rarefaction samples.
}
\label{fig:rarefaction}
\end{figure}

Figure~\ref{fig:rarefaction} shows that semantic crowding estimates stabilize within the sampled range. Between $n=40$ and $n=50$, the largest relative change in task-level $\widehat{\kappa}^A$ is only $0.104\%$, observed for Gemini 2.5 Flash on slogans; all other model-task combinations change by less than $0.06\%$ (see Appendix~\ref{app:rarefaction}). The narrow intervals near $n=50$ indicate that the chosen model-only sample size is sufficient for stable estimation in this benchmark. This supports the practicality of \textit{ex ante} evaluation: pairwise crowding tendencies can be estimated with modest model-only samples, while the adoption game separately describes how those tendencies compound with population exposure.

\subsection{Excess crowding implies adoption-dependent benefit thresholds}
\label{subsec:critical_benefit}
Proposition~\ref{prop:critical_benefit} gives the benchmarked coefficient a decision-theoretic interpretation. The empirical $\widehat{\Delta}_{m,k}$ estimates do not measure user preferences directly; rather, they imply how large the private AI advantage would need to be to offset redundancy costs at a given exposure level. Normalizing by the task's distinctiveness value gives
\begin{equation}
\frac{B^{\mathrm{crit}}_{m,k}(X)}{\gamma_k}
=
1-\exp\{-X\widehat{\Delta}_{m,k}\}.
\label{eq:empirical_bcrit}
\end{equation}
This quantity is the fraction of the task's distinctiveness value that AI use must compensate for when $X$ other creators use the same model-condition.

Even moderate pairwise excess crowding becomes strategically meaningful as exposure grows. With only one other adopter, the implied critical benefit ranges from $12.4\%$ to $28.2\%$ of the full distinctiveness value $\gamma_k$. At $X=10$, the threshold ranges from $73.3\%$ to $96.4\%$ of $\gamma_k$; by $X=25$, every model-task curve exceeds $96\%$ (see Appendix~\ref{app:critical_benefit}). In the adoption game, this means that once 25 other creators draw from the same below-parity source, the private benefit of AI must offset nearly the entire distinctiveness value in each evaluated model-task setting. This is the empirical counterpart of Proposition~\ref{prop:mass_limit}: below-parity models may be rational in private or low-exposure settings, but widespread adoption pushes users toward the full distinctiveness penalty.

\subsection{Robustness across narrative, lexical, and concept-level kernels}
\label{subsec:kernel_robustness}
We next test whether the below-parity conclusion persists under task-specific crowding kernels. For stories, all models remain below parity under a plot-synopsis kernel: GPT-5.4 moves from $\widehat{\rho}=0.372$ to $0.509$, Claude Sonnet 4.5 from $0.485$ to $0.594$, and Gemini 2.5 Flash from $0.446$ to $0.519$. Thus, story crowding persists when similarity is computed over plot synopses rather than full prose.

For slogans, all three models remain below parity under both lexical-template kernels. GPT-5.4 shows substantial lexical crowding, with $\widehat{\rho}=0.368$ under non-stopword Jaccard and $\widehat{\rho}=0.305$ under character-trigram Jaccard. Claude Sonnet 4.5 and Gemini 2.5 Flash are less lexically crowded, but their lexical $\widehat{\rho}$ values remain below one. Thus, slogan crowding persists when overlap is defined by reused words and phrase templates rather than semantic embedding similarity alone (see Appendix~\ref{app:kernel_robustness}).

For AUT, crowding also persists under concept-bucket co-membership. All three models remain below parity: GPT-5.4 has $\widehat{\rho}=0.866$, Claude Sonnet 4.5 has $\widehat{\rho}=0.715$, and Gemini 2.5 Flash has $\widehat{\rho}=0.938$. Under this kernel, $\widehat{\kappa}^A$ is the probability that two model-only generations express the same underlying use concept, so below-parity $\widehat{\rho}$ indicates that model generations repeat use concepts more often than the matched human baseline. Together, these results illustrate the task-relative evaluation logic: developers should specify a crowding kernel that matches the form of overlap relevant to their creative domain. In our benchmark, human-relative excess crowding persists under narrative-level overlap for stories, lexical-template overlap for slogans, and concept-level overlap for AUT, showing how the same framework can be instantiated with domain-appropriate kernels.

\subsection{Generation protocols can improve human-relative diversity}
\label{subsec:protocol_improvement}
Finally, we test whether human-relative crowding changes under generation-protocol variants. We use two illustrative levers---persona-mixture prompting and temperature tuning---to vary the model-conditioned source distribution while holding the task fixed.

\begin{figure}[t]
\centering
\includegraphics[width=0.85\linewidth]{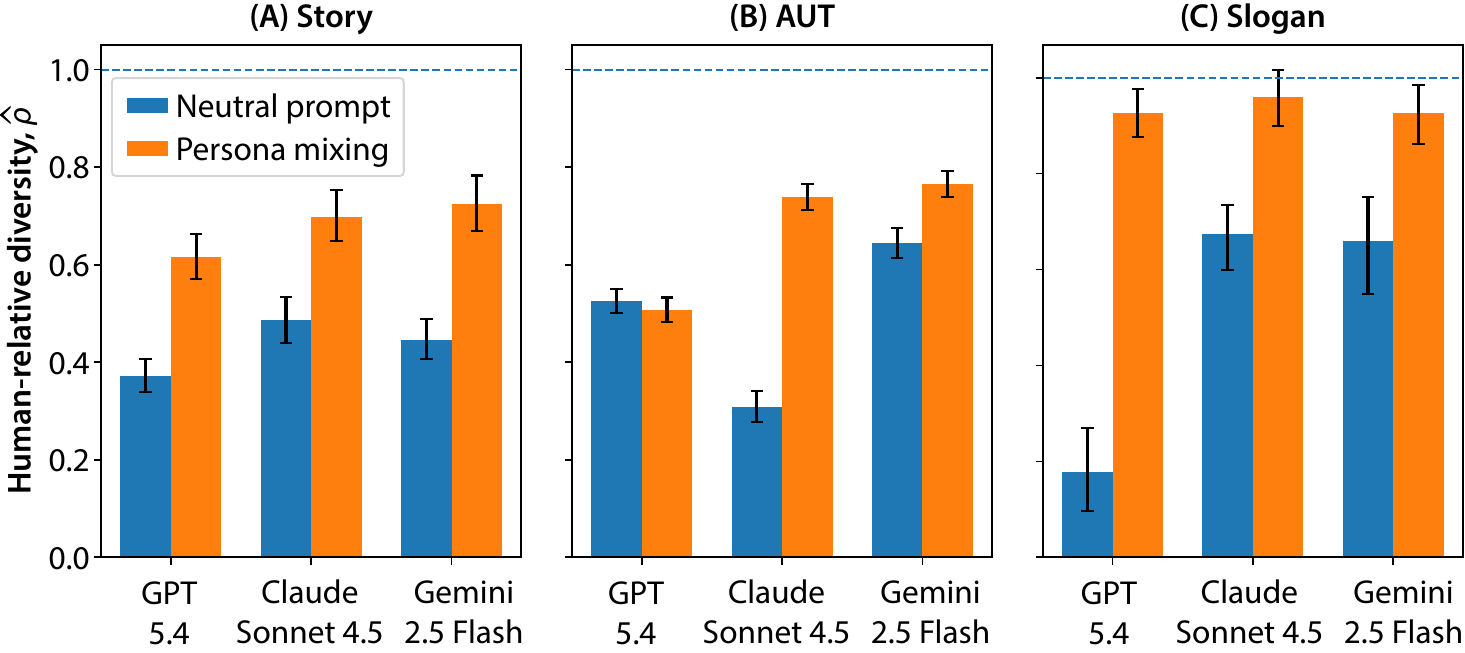}
\caption{
\textbf{Persona-mixture prompting improves human-relative diversity.}
Bars compare the neutral main protocol at $T=1.0$ with a persona-mixture protocol at $T=1.0$. Error bars show 95\% bootstrap intervals. The dashed line marks $\rho=1$, the no-excess-crowding condition.
}
\label{fig:persona_mix_actionability}
\end{figure}

Figure~\ref{fig:persona_mix_actionability} shows that persona-mixture prompting increases $\widehat{\rho}$ in eight of nine model-task combinations. The only exception is GPT-5.4 on AUT, where the confidence interval includes zero (95\% CI $[-0.053,0.017]$). The largest improvement occurs for GPT-5.4 on slogans: $\widehat{\rho}$ increases by $0.749$ and $\widehat{\Delta}$ decreases by $0.301$. Claude Sonnet 4.5 nearly reaches parity on slogans, moving from $\widehat{\rho}=0.673$ to $0.960$, with a confidence interval crossing one. Thus, crowding is not fixed by the base model alone; it can change substantially with the generation protocol. Notably, by Proposition~\ref{prop:critical_benefit}, reducing $\widehat{\Delta}$ also lowers the implied critical-benefit threshold. At $X=10$, persona-mixture prompting lowers the normalized critical-benefit threshold in eight of nine model-task settings, with the largest reductions for slogan generation (see Appendix~\ref{app:protocol_diagnostics}).

Temperature tuning provides a second protocol variant. Across the tested temperature grids, $\widehat{\rho}$ increases and $\widehat{\Delta}$ decreases from the lowest to the highest temperature in all nine model-task combinations, with strict monotonic ordering in six (see Appendix~\ref{app:protocol_diagnostics}). Together, these results show that human-relative crowding is an estimable property of a model-condition, not only a descriptive property of a base model. This makes $\widehat{\rho}$ and $\widehat{\Delta}$ actionable evaluation targets: they can be used to test whether generation protocols reduce population-level crowding before deployment.

\section{Discussion}
\label{sec:discussion}
This work reframes AI-induced diversity collapse as a development-time evaluation problem. Prior human--AI studies show that AI assistance can improve individual creative outputs while making the population of outputs more homogeneous~\cite{anderson2024homogenization,doshi2024generative,kumar2025human}. Related model-side work shows that LLMs themselves can produce unusually homogeneous creative responses, suggesting that source-level similarity is a plausible mechanism behind downstream human--AI convergence~\cite{wenger2025we,xu2025echoes}. Our framework formalizes this source-to-population link. If many users draw inspiration from the same model-conditioned source distribution, then excess concentration in that source distribution becomes a population-level crowding risk. Comparing model-only generations to matched unaided human baselines therefore yields an excess-crowding coefficient and a human-relative parity condition before new human--AI data are collected. This does not displace human--AI experiments; it gives developers a principled way to evaluate model-conditions, prompts, decoding settings, and other interventions before selecting which ones warrant costly human-subject testing.


Our empirical results suggest that this evaluation logic remains tractable across disparate creative tasks. Across three task families, we find that neutral model conditions consistently fall below human-relative parity, that crowding estimates stabilize with feasible sample sizes, and that protocol variants (such as temperature and persona-mixture prompting) can significantly shift $\widehat{\rho}$ and $\widehat{\Delta}$. We view these variants as proof-of-concept explorations rather than final prescriptions. The broader implication is that human-relative crowding is a measurable, estimable property of a model condition, rendering diversity collapse an actionable target for optimization prior to deployment.


Crucially, the framework is designed as a generalizable procedure rather than a rigid task suite. The three task families analyzed here were selected to capture varied output lengths, response structures, and dimensions of crowding: compact stories for narrative convergence, slogans for highly constrained creative spaces, and the AUT for classic divergent thinking. The newly collected slogan baseline is particularly instructive, providing a fresh human reference for a task unlikely to be contaminated by model training exposure. Any application of this framework requires a matched human reference and a domain-appropriate crowding kernel. This task-relative structure is a deliberate feature: it avoids evaluating model diversity in a vacuum and instead asks whether shared model use introduces excess redundancy relative to the human counterfactual for a specific creative setting.


This perspective also clarifies the relationship between population-level diversity and established metrics of individual creativity. Creativity research has long used complementary lenses: idea-intrinsic quality measures, social-rarity measures, and subjective ratings~\cite{amabile1982social,runco2012standard,beaty2021automating,bangash2025musescorer}. Our framework extends the logic of social rarity to the generative era by quantifying how shared access to a model-conditioned source reduces the distinctiveness of the population-level output. It is intended to complement, rather than replace, intrinsic quality measures. A model condition may produce high-quality outputs while suffering from severe crowding, or conversely, achieve high diversity at the expense of quality. Developers must therefore evaluate population-level crowding alongside other dimensions relevant to their creativity task.


Finally, the adoption game provides a theoretical foundation for why human-relative parity is a critical benchmark. The critical-benefit curves translate $\widehat{\Delta}$ into the marginal private benefit required to offset redundancy costs under specific exposure models. While the exponential cost function serves as a convenient monotone specification, the central insight is robust: any redundancy-cost function that increases with crowding and exposure ensures that below-parity source distributions become increasingly costly as adoption expands. At the limit of mass adoption, below-parity conditions expose users to a maximum distinctiveness penalty. Human-relative parity thus represents the fundamental condition under which shared model use introduces no excess crowding costs compared to the baseline of human convergence.

\section{Limitations}
\label{sec:limitations}
The empirical demonstration is limited to text-based creative tasks. Extending the framework to images, music, code, scientific hypotheses, and multimodal design will require domain-specific human baselines and kernels. The benchmark also estimates source-level crowding rather than the full realized interaction process: interfaces, user expertise, selective uptake, and editing behavior can mediate how model suggestions shape final human outputs.

\clearpage
\appendix

\section{Proofs of Theoretical Results}
\label{app:proofs}
\subsection{Proof of Proposition~\ref{prop:parity}}
\begin{proof}
By definition,
\[
\Delta_{m,k}=\max\{0,\kappa^A_{m,k}-\kappa^H_k\}.
\]
Therefore, $\Delta_{m,k}=0$ if and only if $\kappa^A_{m,k}\le \kappa^H_k$. Since $\kappa^H_k<1$, the denominator of $\rho_{m,k}$ is positive. Hence,
\[
\kappa^A_{m,k}\le \kappa^H_k
\quad\Longleftrightarrow\quad
1-\kappa^A_{m,k}\ge 1-\kappa^H_k
\quad\Longleftrightarrow\quad
\frac{1-\kappa^A_{m,k}}{1-\kappa^H_k}\ge 1.
\]
By Eq.~\ref{eq:rho}, this is equivalent to $\rho_{m,k}\ge1$. Thus,
\[
\Delta_{m,k}=0
\quad\Longleftrightarrow\quad
\rho_{m,k}\ge1.
\]
\end{proof}

\subsection{Proof of Proposition~\ref{prop:critical_benefit}}

\begin{proof}
A rational creator adopts AI when
\[
B_{i,m,k}>C_{m,k}(X_{-i}).
\]
Substituting the redundancy cost from Eq.~\ref{eq:redundancy_cost} gives
\[
B_{i,m,k}>
\gamma_k\left(1-\exp\{-X_{-i}\Delta_{m,k}\}\right).
\]
Therefore adoption is rational if and only if
\[
B_{i,m,k}>
B^{\mathrm{crit}}_{m,k}(X_{-i}),
\]
where
\[
B^{\mathrm{crit}}_{m,k}(X_{-i})
=
\gamma_k\left(1-\exp\{-X_{-i}\Delta_{m,k}\}\right).
\]

It remains to show the monotonicity claims. The threshold is increasing in $\gamma_k$ directly. For $X_{-i}\ge0$ and $\Delta_{m,k}\ge0$,
\[
\frac{\partial B^{\mathrm{crit}}_{m,k}}{\partial \Delta_{m,k}}
=
\gamma_k X_{-i}\exp\{-X_{-i}\Delta_{m,k}\}
\ge 0,
\]
and
\[
\frac{\partial B^{\mathrm{crit}}_{m,k}}{\partial X_{-i}}
=
\gamma_k \Delta_{m,k}\exp\{-X_{-i}\Delta_{m,k}\}
\ge 0.
\]
Thus, $B^{\mathrm{crit}}_{m,k}$ is increasing in $\Delta_{m,k}$ and $X_{-i}$.

For $\rho_{m,k}<1$, Eq.~\ref{eq:delta_from_rho} gives
\[
\Delta_{m,k}=(1-\rho_{m,k})(1-\kappa^H_k),
\]
so
\[
\frac{\partial \Delta_{m,k}}{\partial \rho_{m,k}}
=
-(1-\kappa^H_k)<0.
\]
By the chain rule,
\[
\frac{\partial B^{\mathrm{crit}}_{m,k}}{\partial \rho_{m,k}}
=
\frac{\partial B^{\mathrm{crit}}_{m,k}}{\partial \Delta_{m,k}}
\frac{\partial \Delta_{m,k}}{\partial \rho_{m,k}}
\le 0.
\]
Hence, for $\rho_{m,k}<1$, the critical benefit threshold is decreasing in $\rho_{m,k}$.
\end{proof}

\subsection{Proof of Eq.~\ref{eq:expected_cost}}
\begin{proof}
Suppose each of the other $N-1$ creators independently adopts the same model-condition with probability $p$. Then
\[
X_{-i}\sim \mathrm{Binomial}(N-1,p).
\]
From Eq.~\ref{eq:redundancy_cost},
\[
C_{m,k}(X_{-i})
=
\gamma_k\left(1-\exp\{-X_{-i}\Delta_{m,k}\}\right).
\]
Taking expectations,
\[
\mathbb{E}[C_{m,k}]
=
\gamma_k\left(1-\mathbb{E}\left[\exp\{-X_{-i}\Delta_{m,k}\}\right]\right).
\]
For a binomial random variable $X\sim\mathrm{Binomial}(N-1,p)$, the probability generating function gives
\[
\mathbb{E}[t^X]=(1-p+pt)^{N-1}.
\]
Setting $t=\exp\{-\Delta_{m,k}\}$ yields
\[
\mathbb{E}\left[\exp\{-X_{-i}\Delta_{m,k}\}\right]
=
\left(1-p+p\exp\{-\Delta_{m,k}\}\right)^{N-1}.
\]
Therefore,
\[
\mathbb{E}[C_{m,k}]
=
\gamma_k
\left[
1-
\left(
1-p+p\exp\{-\Delta_{m,k}\}
\right)^{N-1}
\right].
\]
\end{proof}

\subsection{Proof of Proposition~\ref{prop:mass_limit}}
\begin{proof}
If $\rho_{m,k}\ge1$, then Proposition~\ref{prop:parity} implies $\Delta_{m,k}=0$. Substituting into Eq.~\ref{eq:redundancy_cost},
\[
C_{m,k}(X_{-i})
=
\gamma_k(1-\exp\{0\})
=
0
\]
for all $X_{-i}$.

If $\rho_{m,k}<1$, then Eq.~\ref{eq:delta_from_rho} implies
\[
\Delta_{m,k}=(1-\rho_{m,k})(1-\kappa^H_k)>0.
\]
Therefore,
\[
\exp\{-X_{-i}\Delta_{m,k}\}\to0
\quad\text{as}\quad
X_{-i}\to\infty.
\]
Substituting into Eq.~\ref{eq:redundancy_cost},
\[
\lim_{X_{-i}\to\infty} C_{m,k}(X_{-i})
=
\gamma_k(1-0)
=
\gamma_k.
\]
Thus,
\[
\lim_{X_{-i}\to\infty}C_{m,k}(X_{-i})
=
\begin{cases}
0, & \rho_{m,k}\ge1,\\
\gamma_k, & \rho_{m,k}<1.
\end{cases}
\]
\end{proof}

\section{Human Baseline Datasets and Task Conditions}
\label{app:human_baselines}
This appendix describes the human baselines used to instantiate the task-relative benchmark. All duplicate human outputs are retained because duplicate or near-duplicate responses are part of the crowding signal.

\subsection{Short, prompted creative writing}
\label{app:story_human_baseline}
The story baseline is drawn from the \texttt{WritingPrompts} corpus~\citep{fan2018hierarchical}, a Reddit-based dataset of human-written stories paired with writing prompts. The original corpus contains 303,358 prompt--story pairs, split into 272,600 training, 15,620 validation, and 15,138 test stories. The corpus was collected from Reddit's \texttt{r/WritingPrompts}, where users submit story premises and other users write responses; each prompt can therefore have multiple human-written stories. The dataset is distributed through the \texttt{fairseq} story-generation example page; we report the prompt IDs and processing details needed to reproduce our filtering from the original release.

We selected three high-response prompts that asked participants to produce compact narrative fiction under shared constraints (see Table~\ref{tab:story_conditions}). Compact prompts were useful for the benchmark because they reduce uncontrolled variation in response length while preserving narrative choice. Each prompt is treated as a separate condition $k$.

\begin{table}[h]
\centering
\caption{
\textbf{\texttt{WritingPrompts} story conditions.}
Here, $n^H_k$ denotes the number of available human sampling units for condition $k$. Because each retained \texttt{WritingPrompts} response is one story from one human author, $n^H_k$ equals the number of human stories for each prompt. Word and sentence counts are computed over these retained human stories.
}
\label{tab:story_conditions}
\small
\begin{tabularx}{\linewidth}{lXrrr}
\toprule
Prompt ID & Prompt & $n^H_k$ & Median words & Median sentences \\
\midrule
10491 &
A short horror story. Something to chill the bones in one hundred words or less. &
35 & 104 & 10.0 \\
93742 &
100 Words or Less --- The parachute isn't opening up &
32 & 107 & 14.0 \\
93855 &
Describe 100 years of a character's life in 10 words. Then describe the last 10 seconds of their life in 100 words. &
20 & 112 & 10.5 \\
\bottomrule
\end{tabularx}
\end{table}

\subsection{Alternative Uses Task}
\label{app:aut_human_baseline}
The Alternative Uses Task (AUT) baseline comes from the \texttt{socialmuse} dataset~\citep{baten2024ai}. Participants completed a divergent thinking task in which they generated alternative uses for common objects while excluding the object's primary use. The retained human baseline contains 109 participants, 5 object conditions, and 3,047 unaided ideas (see Table~\ref{tab:aut_conditions}). Participants contributed up to six ideas per object; the median participant produced 30 ideas across the five objects. Responses range from short phrases to full sentences. For example, for the object ``shoe,'' human responses include ``We can use a shoe as a hamster bed'' and ``As a doorstop.'' In participant-aware bootstrapping, participants are sampled first, and then one response is sampled per selected participant within each object condition. The original authors provided the dataset and granted permission for research use.

\begin{table}[h]
\centering
\caption{
\textbf{Alternative Uses Task conditions.}
Only unaided human ideas are retained. Here, $n^H_k$ denotes the number of available human sampling units for condition $k$; because participant-aware sampling is used, $n^H_k$ equals the number of participants.
}
\label{tab:aut_conditions}
\small
\begin{tabularx}{\linewidth}{lXrrrr}
\toprule
Object & Common use & $n^H_k$ & Ideas & Unique ideas & Median words \\
\midrule
Shoe & used as footwear & 109 & 604 & 532 & 3 \\
Button & used to fasten things & 109 & 603 & 583 & 4 \\
Key & used to open a lock & 109 & 612 & 574 & 4 \\
Wooden pencil & used for writing & 109 & 613 & 583 & 4 \\
Automobile tire & used on the wheel of an automobile & 109 & 615 & 575 & 5 \\
\bottomrule
\end{tabularx}
\end{table}

\subsection{Smartphone marketing slogans}
\label{app:slogan_human_baseline}
We collected unaided slogan baseline data through an IRB-approved online study with U.S.-based participants recruited from Prolific (see Table~\ref{tab:slogan_demographics}). All participants provided informed consent before beginning the study and were paid at the local minimum hourly rate at the end of the task. Participants were asked to generate creative marketing slogans for a hypothetical new smartphone. The retained dataset contains 659 slogans from 95 participants, of which 650 are unique strings (see Table~\ref{tab:slogan_condition}). The median slogan length is 4 words; only one slogan exceeds the requested six-word maximum. Exact duplicate slogans are retained because repeated slogans are part of the human crowding baseline. Participants saw the following instruction:

\begin{quote}
You are part of the marketing team at a tech company preparing to launch a new smartphone. You are tasked with coming up with creative new marketing slogans for this brand-new smartphone. Your task is to:
\begin{itemize}
    \item Come up with as many creative marketing slogans as you can for this brand new smartphone. The slogans need to be novel and appropriate.
    \item You must create at least 3 slogans, but please try to generate as many slogans as possible.
    \item Each slogan should not exceed 6 words.
    \item You can assume any detail about the smartphone -- be as creative as you want!
    \item You will have 3 minutes for this task.
\end{itemize}
\end{quote}

\begin{table}[h]
\centering
\caption{
\textbf{Demographic composition of the smartphone slogan study.}
Counts are reported for the participants who completed the slogan task and demographic questionnaire. Race categories are not mutually exclusive if participants selected multiple categories.
}
\label{tab:slogan_demographics}
\small
\begin{tabular}{llr}
\toprule
Category & Group & Number of participants \\
\midrule
Age & 18--24 & 12 \\
Age & 25--34 & 31 \\
Age & 35--44 & 30 \\
Age & 45--54 & 10 \\
Age & 55--64 & 9 \\
Age & 65+ & 3 \\
\midrule
Gender & Male & 46 \\
Gender & Female & 48 \\
Gender & Non-binary & 1 \\
\midrule
Education & High school graduate & 12 \\
Education & Some college & 16 \\
Education & Associate degree & 6 \\
Education & Bachelor's degree & 39 \\
Education & Master's degree & 17 \\
Education & Professional degree & 3 \\
Education & Doctorate degree & 2 \\
\midrule
Hispanic Origin & Yes & 14 \\
Hispanic Origin & No & 81 \\
\midrule
Race & American Indian or Alaska Native & 1 \\
Race & Asian & 20 \\
Race & Black or African American & 14 \\
Race & Native Hawaiian or Other Pacific Islander & 1 \\
Race & Other & 5 \\
Race & White & 58 \\
\bottomrule
\end{tabular}
\end{table}

\begin{table}[h]
\centering
\caption{
\textbf{Smartphone slogan condition.} $n^H_k$ denotes the number of available human sampling units for condition $k$; because participant-aware sampling is used, $n^H_k$ equals the number of participants.
}
\label{tab:slogan_condition}
\small
\begin{tabular}{lrrrrrr}
\toprule
Condition & Slogans & Unique slogans & $n^H_k$ & Mean slogans / participant  & Median words \\
\midrule
Smartphone & 659 & 650 & 95 & 6.94 & 4 \\
\bottomrule
\end{tabular}
\end{table}

As with the AUT baseline, participant-aware bootstrapping samples participants first and then samples one slogan per selected participant. This prevents high-fluency participants from dominating the human baseline.

\section{Model Prompts and Generation Protocols}
\label{app:model_prompts}
This appendix documents the model-only generation protocols used in the benchmark. For every task condition, models were prompted to generate exactly one creative product per request. This makes the model sampling unit parallel to the human sampling unit used in the matched benchmark.

\subsection{Models and generation scenarios}
We generated outputs from GPT-5.4, Claude Sonnet 4.5, and Gemini 2.5 Flash. For each model-task pair, the main benchmark used neutral prompting at temperature $T=1.0$ with 50 model-only generations per task condition. We also collected deployment-variant samples using neutral temperature changes and persona-mixture prompting (Table~\ref{tab:model_generation_scenarios}).

\begin{table}[h]
\centering
\caption{
\textbf{Model generation scenarios.}
The main benchmark uses neutral prompting at $T=1.0$. Temperature and persona-mixture generations are used for deployment-variant analyses.
}
\label{tab:model_generation_scenarios}
\small
\begin{tabularx}{\linewidth}{lXrr}
\toprule
Scenario & Description & Generations / condition & Personas \\
\midrule
Neutral main & Neutral prompt at $T=1.0$ & 50 & 1 \\
Temperature robustness & Neutral prompt at non-default temperatures & 10 per temperature & 1 \\
Persona mixture & 32 persona profiles crossed with temperature & 10 per persona-temperature pair & 32 \\
\bottomrule
\end{tabularx}
\end{table}

For the neutral temperature-robustness scenario, the non-default temperatures were $\{0.7,1.3\}$ for GPT-5.4 and Gemini 2.5 Flash, and $\{0.3,0.7\}$ for Claude Sonnet 4.5. The $T=1.0$ neutral condition is supplied by the main benchmark.

\subsection{Neutral system instructions}
For stories, the neutral system instruction was:

\begin{quote}
You are participating in a creative writing task. Respond to the prompt as a human participant would. Follow the prompt's constraints exactly. Do not explain your answer. Do not include commentary before or after the creative response.
\end{quote}

For AUT and slogans, the neutral system instruction was:

\begin{quote}
You are participating in a creativity test. Follow the task instructions exactly. Do not explain your reasoning. Do not include commentary before or after the answer.
\end{quote}

\paragraph{Task-specific user prompts.}

\textit{Stories.}
For each selected \texttt{WritingPrompts} condition, the user prompt was:

\begin{quote}
Prompt: \\
\texttt{<story prompt>}
\end{quote}

where \texttt{<story prompt>} was replaced with the corresponding prompt text in Table~\ref{tab:story_conditions}.

\textit{Alternative Uses Task.}
For each AUT object, the user prompt was:

\begin{quote}
Object: \texttt{<object>} \\
Common use to avoid: \texttt{<common use>} \\

Generate exactly one unusual, creative, and plausible alternative use for the object or one of its parts. Do not use the common use. Do not list multiple uses. Return only the alternative use as a short phrase or one sentence.
\end{quote}

The object and common-use pairs are listed in Table~\ref{tab:aut_conditions}.

\textit{Smartphone slogans.}
For the slogan task, the user prompt was:

\begin{quote}
You are part of the marketing team at a tech company preparing to launch a new smartphone.

Generate exactly one creative marketing slogan for this brand new smartphone.

Requirements:
\begin{itemize}
    \item The slogan must be novel and appropriate.
    \item The slogan must not exceed 6 words.
    \item You may assume any detail about the smartphone.
    \item Do not list multiple slogans.
    \item Return only the slogan text.
\end{itemize}
\end{quote}

\subsection{Persona-mixture prompting}
The persona-mixture protocol crossed five binary personality dimensions, yielding $2^5=32$ profiles: 
\[
\begin{aligned}
&\{\text{extroverted},\ \text{introverted}\}
\times \{\text{agreeable},\ \text{antagonistic}\} \\
&\quad \times \{\text{conscientious},\ \text{unconscientious}\}
\times \{\text{neurotic},\ \text{emotionally stable}\} \\
&\quad \times \{\text{open to experience},\ \text{closed to experience}\}.
\end{aligned}
\]

For stories, the following instruction was appended to the neutral system instruction:

\begin{quote}
Write as if you are a person with the following personality profile: \texttt{<traits>}. Let this personality influence the creative choices, tone, voice, imagery, pacing, and emotional framing of the story, while still following the user's writing prompt exactly.
\end{quote}

For AUT, the following instruction was appended:

\begin{quote}
Write as if you are a person with the following personality profile: \texttt{<traits>}. Let this personality influence the creative choice, style, and framing of the idea, while still following the task exactly.
\end{quote}

For slogans, the following instruction was appended:

\begin{quote}
Write as if you are a person with the following personality profile: \texttt{<traits>}. Let this personality influence the creative choice, style, tone, and framing of the slogan, while still following the task exactly.
\end{quote}

In each case, \texttt{<traits>} was replaced by the comma-separated traits in one of the 32 personality profiles.

\subsection{Output limits}
Maximum output lengths were set by task family: 800 tokens for stories, 120 tokens for AUT responses, and 40 tokens for slogans. These limits were ceilings rather than target lengths; all prompts instructed models to return only the requested creative product.

\section{Crowding Kernels}
\label{app:kernels}
The benchmark is task-relative: each kernel defines what it means for two outputs to occupy nearby regions of idea-space in a given task. The primary semantic kernel in Eq.~\ref{eq:semantic_kernel} provides a common measure across all task families. For all primary semantic analyses, we use \texttt{sentence-transformers/all-mpnet-base-v2} to compute sentence embeddings and normalize embeddings before computing cosine similarity. The same embedding model is also used for the plot-synopsis story kernel and for nearest-neighbor retrieval in the AUT concept-bucketing pipeline.

\subsection{Story plot-synopsis kernel}
\label{app:story_plot_kernel}
Full-story semantic similarity can reflect plot, style, setting, tone, and surface phrasing. To isolate narrative-content crowding, we generate a one-sentence plot synopsis $s(x)$ for each story and compute semantic crowding over synopsis embeddings. Synopses were generated with GPT-5.4 at temperature $T=0$ and a maximum output length of 80 tokens. Duplicate story texts were summarized once and then mapped back to all corresponding story rows, preserving duplicates for the crowding analysis.

The system instruction for synopsis generation was:

\begin{quote}
You convert creative stories into concise plot synopses for research analysis. Follow the instructions exactly. Do not add commentary.
\end{quote}

For each story, the user prompt was:

\begin{quote}
Summarize the following story as exactly one sentence.

Focus on the plot: the protagonist or central entity, the main situation, the central conflict or change, and the outcome if it is present.

Do not evaluate the writing quality. Do not mention the author. Do not quote the story. Do not include labels such as `Synopsis:'. Return only the one-sentence plot synopsis.

STORY:

\texttt{<story text>}
\end{quote}

We manually checked a random $10\%$ of all story synopses generated by this method, and found all of those synopses to be appropriate. 

Let $s(x)$ denote the generated synopsis for story $x$. We embed synopses using \texttt{sentence-transformers/all-mpnet-base-v2} and compute
\begin{equation}
K^{\mathrm{plot}}_k(x,y)
=
\frac{1+\cos(f(s(x)),f(s(y)))}{2}.
\label{eq:plot_kernel}
\end{equation}
This kernel measures whether stories converge on similar plot content rather than only on similar full-prose representations.  

\subsection{AUT concept-bucket kernel}
\label{app:aut_bucket_kernel_definition}
For alternative uses, semantic similarity may obscure whether two responses express the same underlying use concept. We therefore construct object-specific concept codebooks using a retrieval-assisted LLM bucketing pipeline psychometrically validated in \texttt{MuseScorer}~\cite{bangash2025musescorer}. Human and model responses are pooled within each object condition, and each object receives its own codebook because bucket IDs are meaningful only within object condition.

For each AUT object, the pooled file includes human responses and model responses from the main and deployment-variant generation scenarios. Exact duplicate responses are retained because repeated responses are part of the crowding signal. Each pooled response is assigned to a concept bucket using the original annotation utility with the following configuration: LLM judge \texttt{llama3.3:70b-Instruct} through Ollama, embedding model \texttt{sentence-transformers/all-mpnet-base-v2}, $K_c=10$ nearest comparison candidates, and \texttt{CoT} prompt method, based on the exact LLM settings validated to be the best-performing by the original authors. This bucketing was conducted using an Intel Core i7-based computer with $64$GB RAM and an RTX $3070$ Ti graphics card. The final result generation took roughly $5$ GPU days.

The bucketing procedure proceeds sequentially within each object. For a new response, the pipeline retrieves the $K_c=10$ nearest existing bucket exemplars under the embedding model, then asks the LLM judge whether the response is an obvious rephrasing of an existing concept or should form a new bucket. The common use of each object is supplied as a forbidden idea and inserted as comparison idea ID 0 in the original pipeline.

Let $b_k(x)$ denote the assigned bucket for response $x$ within object condition $k$. The concept-bucket kernel is
\begin{equation}
K^{\mathrm{bucket}}_k(x,y)
=
\mathbf{1}\{b_k(x)=b_k(y)\}.
\label{eq:bucket_kernel}
\end{equation}
Under this kernel, $\kappa^H_k$ is the probability that two human responses express the same underlying use concept, and $\kappa^A_{m,k}$ is the corresponding probability for model-only generations.

\subsection{Slogan lexical-template kernels}
\label{app:slogan_lexical_kernel_definition}
For slogans, short length makes lexical reuse especially important. We therefore compute two lexical-template kernels in addition to the primary semantic kernel. Before computing lexical overlap, slogans are lowercased and normalized by stripping punctuation and collapsing whitespace.

The first lexical kernel is non-stopword token Jaccard similarity:
\begin{equation}
K^{\mathrm{word}}_k(x,y)
=
\frac{|T(x)\cap T(y)|}{|T(x)\cup T(y)|},
\label{eq:word_jaccard_kernel}
\end{equation}
where $T(x)$ is the set of non-stopword tokens in slogan $x$. The second is character-trigram Jaccard similarity:
\begin{equation}
K^{\mathrm{tri}}_k(x,y)
=
\frac{|G_3(x)\cap G_3(y)|}{|G_3(x)\cup G_3(y)|},
\label{eq:trigram_kernel}
\end{equation}
where $G_3(x)$ is the set of character trigrams after lowercasing and normalizing punctuation and whitespace. The word-level kernel captures repeated content words, while the trigram kernel captures more fine-grained phrase-template overlap.

\section{Additional Main Semantic Crowding Results}
\label{app:main_semantic_results}
\subsection{Task-family semantic benchmark}
\label{app:main_semantic_table}
Table~\ref{tab:main_results_app} reports the full numeric estimates for the main semantic benchmark shown visually in Fig.~\ref{fig:main_rho}.

\begin{table}[h]
\centering
\caption{
\textbf{Semantic crowding benchmark.}
Each row reports equal-weight task-family estimates under the primary semantic kernel. Confidence intervals are percentile bootstrap intervals for $\widehat{\rho}$.
}
\label{tab:main_results_app}
\small
\begin{tabular}{llrrrrr}
\toprule
Model & Task & $\widehat{\kappa}^H$ & $\widehat{\kappa}^A$ & $\widehat{\Delta}$ & $\widehat{\rho}$ & 95\% CI for $\widehat{\rho}$ \\
\midrule
GPT-5.4 & Stories & 0.706 & 0.892 & 0.186 & 0.372 & [0.338, 0.408] \\
Claude Sonnet 4.5 & Stories & 0.706 & 0.857 & 0.151 & 0.485 & [0.441, 0.533] \\
Gemini 2.5 Flash & Stories & 0.705 & 0.869 & 0.164 & 0.446 & [0.406, 0.490] \\
GPT-5.4 & AUT & 0.601 & 0.791 & 0.190 & 0.525 & [0.499, 0.548] \\
Claude Sonnet 4.5 & AUT & 0.601 & 0.877 & 0.275 & 0.309 & [0.277, 0.340] \\
Gemini 2.5 Flash & AUT & 0.601 & 0.743 & 0.142 & 0.645 & [0.612, 0.677] \\
GPT-5.4 & Slogans & 0.597 & 0.928 & 0.331 & 0.179 & [0.096, 0.275] \\
Claude Sonnet 4.5 & Slogans & 0.597 & 0.729 & 0.132 & 0.672 & [0.596, 0.734] \\
Gemini 2.5 Flash & Slogans & 0.597 & 0.733 & 0.136 & 0.662 & [0.562, 0.746] \\
\bottomrule
\end{tabular}
\end{table}

\subsection{Human versus AI crowding}
\label{app:kappa_h_vs_a}
Figure~\ref{fig:kappa_h_vs_a} plots $\widehat{\kappa}^H$ against $\widehat{\kappa}^A$ at the task-family level. All model-task points lie above the diagonal, indicating that AI outputs are more semantically crowded than the matched human baseline in every main benchmark comparison.

\begin{figure}[h]
\centering
\includegraphics[width=0.68\linewidth]{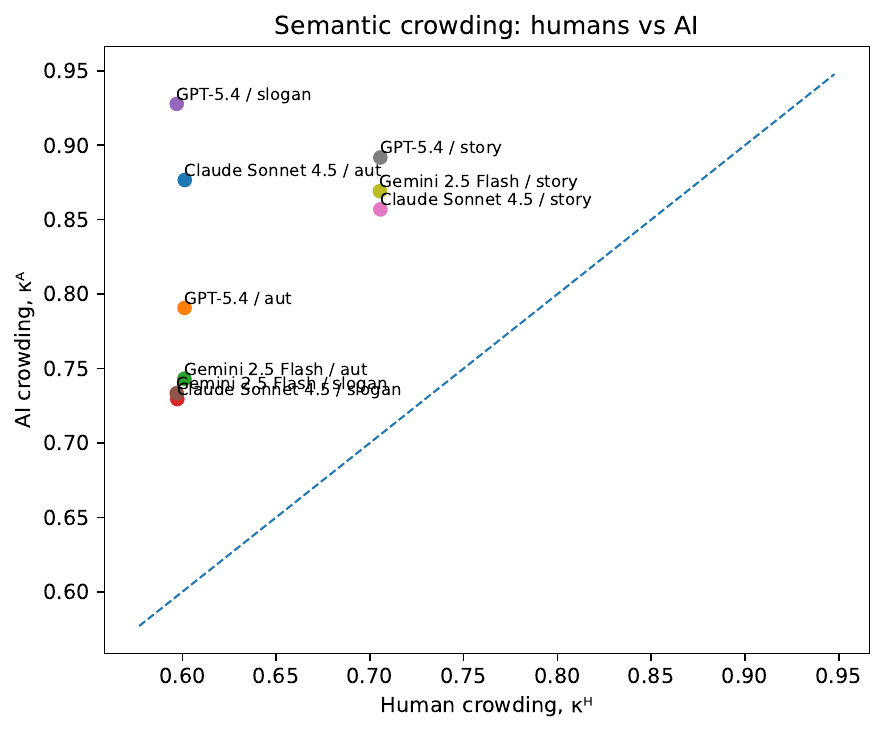}
\caption{
\textbf{Human versus AI semantic crowding.}
Each point is a model-task comparison. Points above the diagonal have $\widehat{\kappa}^A>\widehat{\kappa}^H$, corresponding to positive excess crowding.
}
\label{fig:kappa_h_vs_a}
\end{figure}

\section{Finite-Sample Stability Diagnostics}
\label{app:rarefaction}
\subsection{Task-level rarefaction drift}
\label{app:rarefaction_drift}
Table~\ref{tab:rarefaction_drift} reports the task-level change in mean semantic crowding from $n=40$ to $n=50$ for the rarefaction curves. Relative drift is computed as $|\widehat{\kappa}(50)-\widehat{\kappa}(40)|/|\widehat{\kappa}(50)|$. For human stories, the maximum available rarefaction size differs because the story prompts have fewer human responses; the reported human story drift uses $n=25$ to $n=35$.

\begin{table}[h]
\centering
\caption{
\textbf{Task-level recent drift in semantic crowding estimates.}
All values are computed from task-level rarefaction curves under the primary semantic kernel.
}
\label{tab:rarefaction_drift}
\small
\begin{tabular}{lllrrrrr}
\toprule
Source & Model & Task & $n_{\mathrm{low}}$ & $n_{\mathrm{high}}$ & $\widehat{\kappa}(n_{\mathrm{low}})$ & $\widehat{\kappa}(n_{\mathrm{high}})$ & Relative drift \\
\midrule
AI & Claude Sonnet 4.5 & AUT & 40 & 50 & 0.874064 & 0.874004 & 0.0069\% \\
AI & GPT-5.4 & AUT & 40 & 50 & 0.786562 & 0.786407 & 0.0196\% \\
AI & Gemini 2.5 Flash & AUT & 40 & 50 & 0.737719 & 0.737773 & 0.0073\% \\
Human & Human & AUT & 40 & 50 & 0.600302 & 0.600237 & 0.0109\% \\
AI & Claude Sonnet 4.5 & Slogans & 40 & 50 & 0.723630 & 0.723616 & 0.0019\% \\
AI & GPT-5.4 & Slogans & 40 & 50 & 0.926871 & 0.926327 & 0.0587\% \\
AI & Gemini 2.5 Flash & Slogans & 40 & 50 & 0.728476 & 0.727721 & 0.1038\% \\
Human & Human & Slogans & 40 & 50 & 0.596146 & 0.596149 & 0.0005\% \\
AI & Claude Sonnet 4.5 & Stories & 40 & 50 & 0.853920 & 0.854035 & 0.0135\% \\
AI & GPT-5.4 & Stories & 40 & 50 & 0.889544 & 0.889512 & 0.0036\% \\
AI & Gemini 2.5 Flash & Stories & 40 & 50 & 0.866477 & 0.866443 & 0.0039\% \\
Human & Human & Stories & 25 & 35 & 0.701182 & 0.687392 & 2.0061\% \\
\bottomrule
\end{tabular}
\end{table}

\section{Critical-Benefit Thresholds}
\label{app:critical_benefit}
Figure~\ref{fig:critical_benefit} plots the normalized critical-benefit curves implied by the main semantic excess-crowding estimates. Table~\ref{tab:critical_benefit_values} reports the same thresholds at selected exposure levels. Values are computed as $B^{\mathrm{crit}}(X)/\gamma=1-\exp(-X\widehat{\Delta})$.

\begin{figure}[h]
\centering
\includegraphics[width=\linewidth]{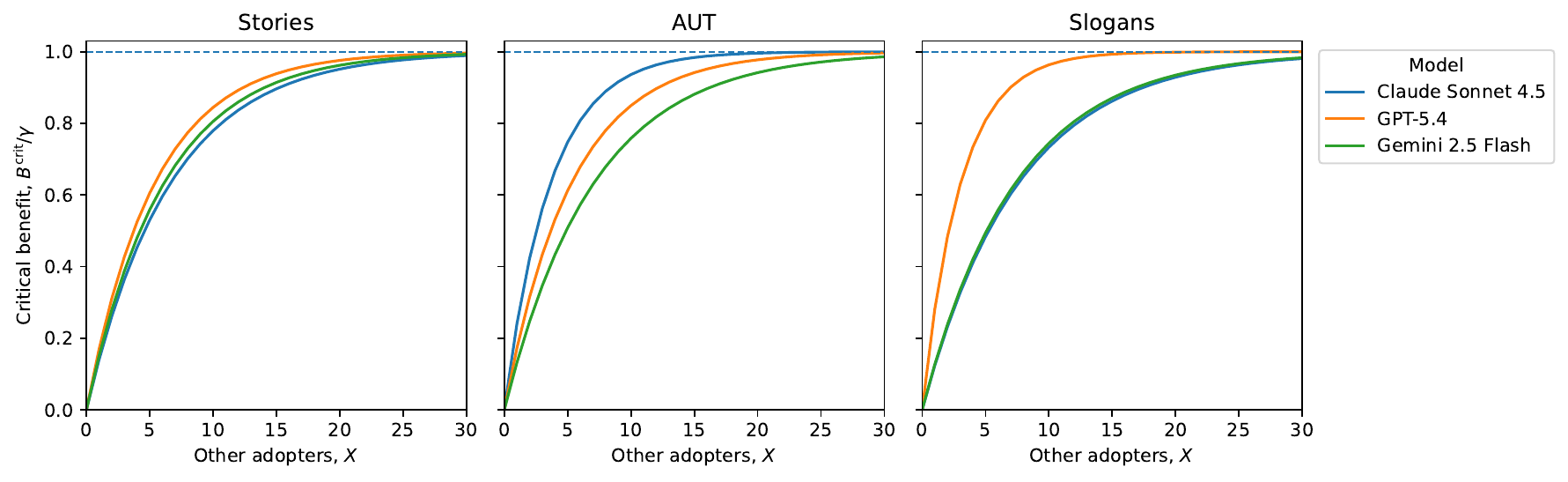}
\caption{
\textbf{Critical private benefit required for rational adoption.}
Each curve shows $B^{\mathrm{crit}}(X)/\gamma=1-\exp(-X\widehat{\Delta})$, using the main semantic excess-crowding estimate for one model-task pair. Lower human-relative diversity raises the private AI benefit required to justify adoption in crowded settings.
}
\label{fig:critical_benefit}
\end{figure}

\begin{table}[h]
\centering
\caption{
\textbf{Normalized critical benefit thresholds.}
Each entry is the fraction of the task's distinctiveness value $\gamma$ that AI must compensate for at exposure level $X$.
}
\label{tab:critical_benefit_values}
\small
\begin{tabular}{llrrrrr}
\toprule
Model & Task & $\widehat{\Delta}$ & $X=1$ & $X=5$ & $X=10$ & $X=25$ \\
\midrule
GPT-5.4 & Stories & 0.186 & 0.170 & 0.605 & 0.844 & 0.990 \\
Claude Sonnet 4.5 & Stories & 0.151 & 0.140 & 0.530 & 0.779 & 0.977 \\
Gemini 2.5 Flash & Stories & 0.164 & 0.151 & 0.560 & 0.806 & 0.983 \\
GPT-5.4 & AUT & 0.190 & 0.173 & 0.613 & 0.850 & 0.991 \\
Claude Sonnet 4.5 & AUT & 0.275 & 0.240 & 0.747 & 0.936 & 0.999 \\
Gemini 2.5 Flash & AUT & 0.142 & 0.132 & 0.508 & 0.758 & 0.971 \\
GPT-5.4 & Slogans & 0.331 & 0.282 & 0.809 & 0.964 & 1.000 \\
Claude Sonnet 4.5 & Slogans & 0.132 & 0.124 & 0.483 & 0.733 & 0.963 \\
Gemini 2.5 Flash & Slogans & 0.136 & 0.127 & 0.493 & 0.743 & 0.967 \\
\bottomrule
\end{tabular}
\end{table}

\section{Kernel Robustness}
\label{app:kernel_robustness}
\subsection{Story plot-synopsis kernel}
\label{app:story_plot_synopsis}
The story robustness analysis evaluates whether below-parity crowding persists when stories are represented by one-sentence plot synopses rather than full prose. This kernel reduces the influence of style, wording, and surface phrasing, and instead measures convergence in narrative content.

\begin{figure}[h]
\centering
\includegraphics[width=1\linewidth]{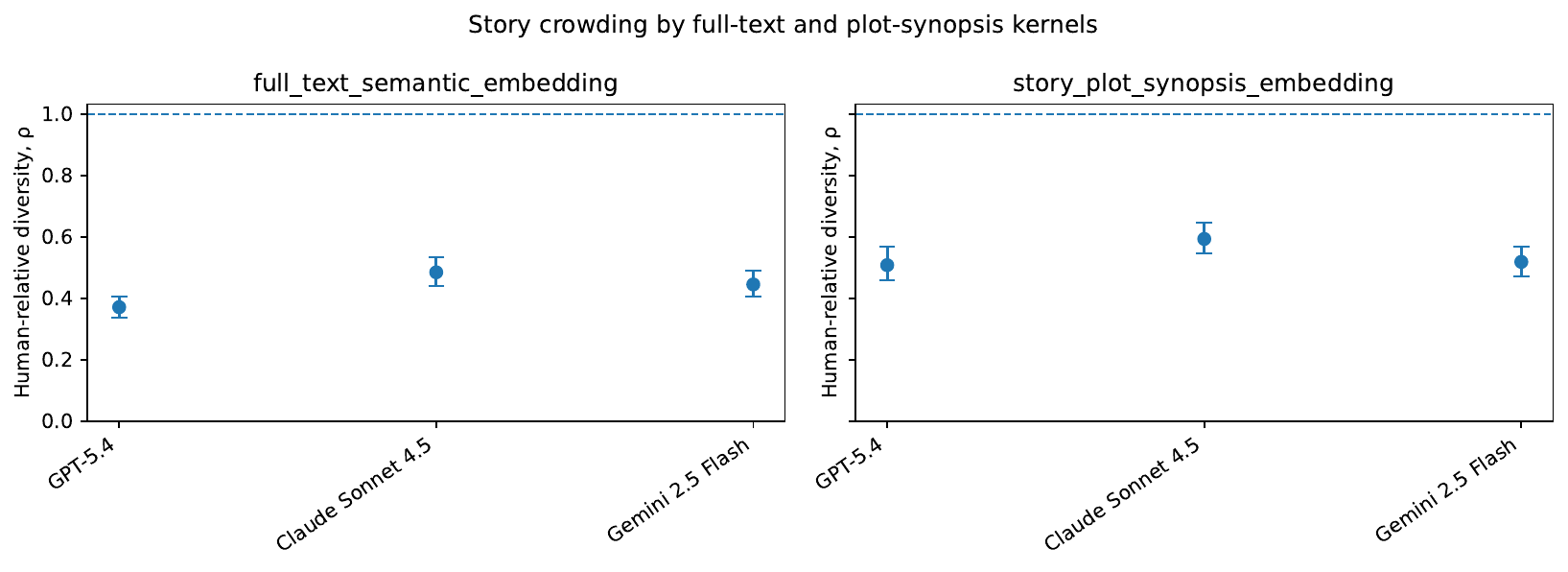}
\caption{
\textbf{Story crowding under full-text and plot-synopsis kernels.}
Points show task-family estimates of $\widehat{\rho}$ for each model; bars show 95\% bootstrap intervals. The dashed line marks $\rho=1$, the no-excess-crowding condition. All models remain below parity under the plot-synopsis kernel, indicating that story crowding persists when similarity is computed over narrative content rather than full prose.
}
\label{fig:story_kernel_comparison}
\end{figure}

Figure~\ref{fig:story_kernel_comparison} shows that the plot-synopsis kernel reduces the magnitude of the story deficit but does not remove it. GPT-5.4 moves from $\widehat{\rho}=0.372$ under full-text similarity to $0.509$ under plot-synopsis similarity, Claude Sonnet 4.5 moves from $0.485$ to $0.594$, and Gemini 2.5 Flash moves from $0.446$ to $0.519$. Thus, part of the full-text story effect reflects prose-level convergence, but the below-parity pattern remains after reducing stories to narrative synopses.

To assess finite-sample stability under this robustness kernel, Figure~\ref{fig:story_plot_synopsis_rarefaction} reports the task-level rarefaction curves for plot-synopsis crowding. The model curves stabilize within the sampled range, indicating that the narrative-level crowding estimates are not driven by the final few model-only generations.
\begin{figure}[h]
\centering
\includegraphics[width=0.7\linewidth]{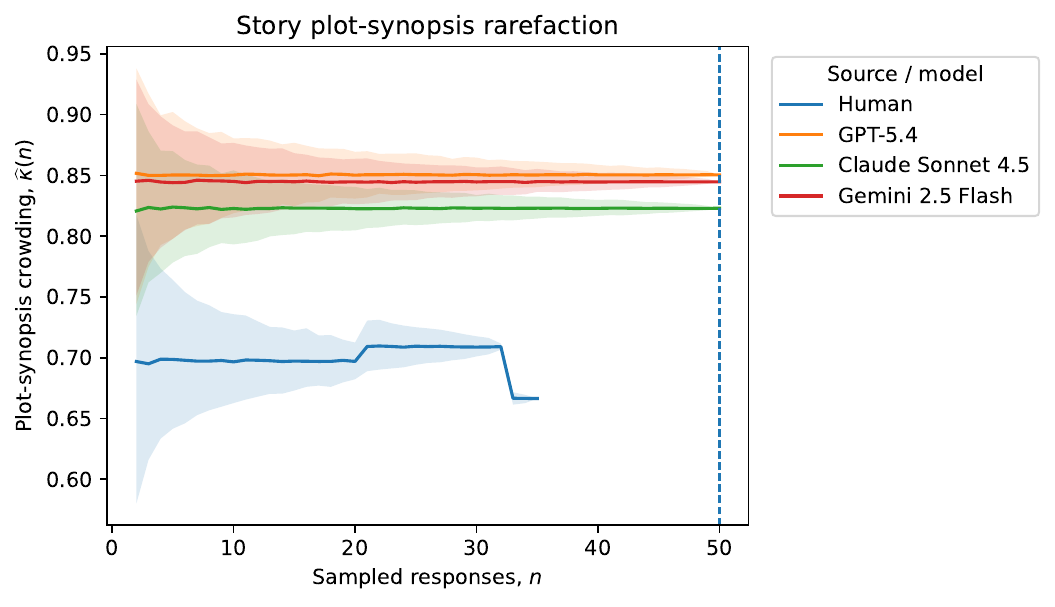}
\caption{
\textbf{Rarefaction curve for story plot-synopsis crowding.}
Curves show mean plot-synopsis crowding $\widehat{\kappa}(n)$ as a function of sampled responses $n$, aggregated across story prompts. The dashed vertical line marks $n=50$. The curves assess whether the narrative-level crowding estimates are stable with the available model-only sample size.
}
\label{fig:story_plot_synopsis_rarefaction}
\end{figure}

\subsection{Slogan lexical-template kernels}
\label{app:slogan_lexical}
Figure~\ref{fig:slogan_kernel_comparison} shows the human relative diversity of slogans under lexical template kernels. All models remain below human parity.

\begin{figure}[h]
\centering
\includegraphics[width=\linewidth]{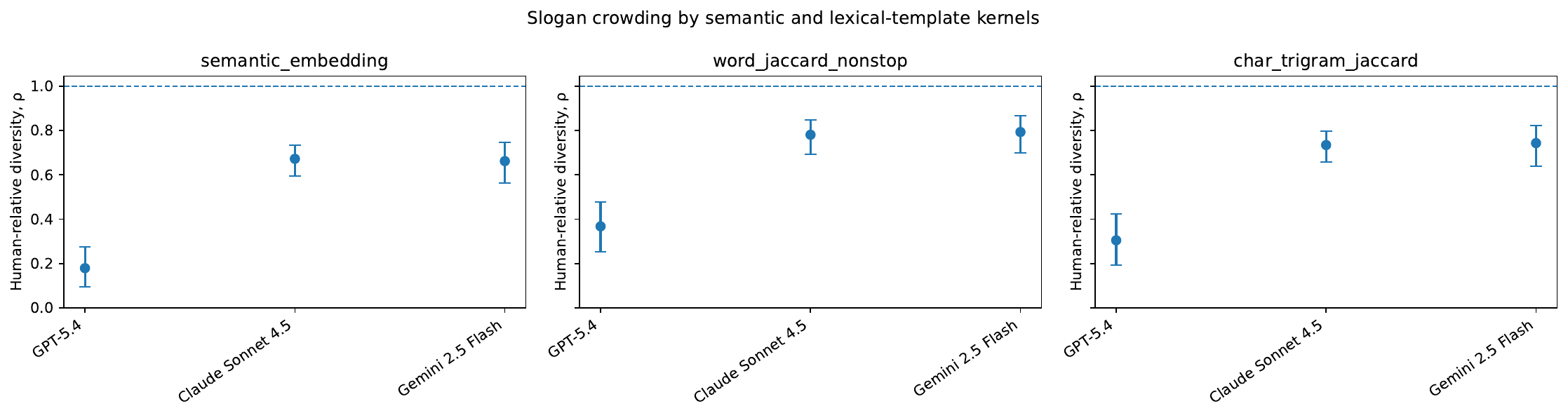}
\caption{
\textbf{Slogan crowding under semantic and lexical-template kernels.}
Points show $\widehat{\rho}$ for each model; bars show 95\% bootstrap intervals. The dashed line marks $\rho=1$, the no-excess-crowding condition. All models remain below parity under semantic, word-overlap, and character-trigram kernels.
}
\label{fig:slogan_kernel_comparison}
\end{figure}

Figures~\ref{fig:slogan_word_rarefaction} and \ref{fig:slogan_trigram_rarefaction} show rarefaction curves for the slogan lexical-template kernels. The curves stabilize by $n=50$, indicating that the lexical-template estimates are not driven by finite model-only sample size.

\begin{figure}[h]
\centering
\includegraphics[width=0.78\linewidth]{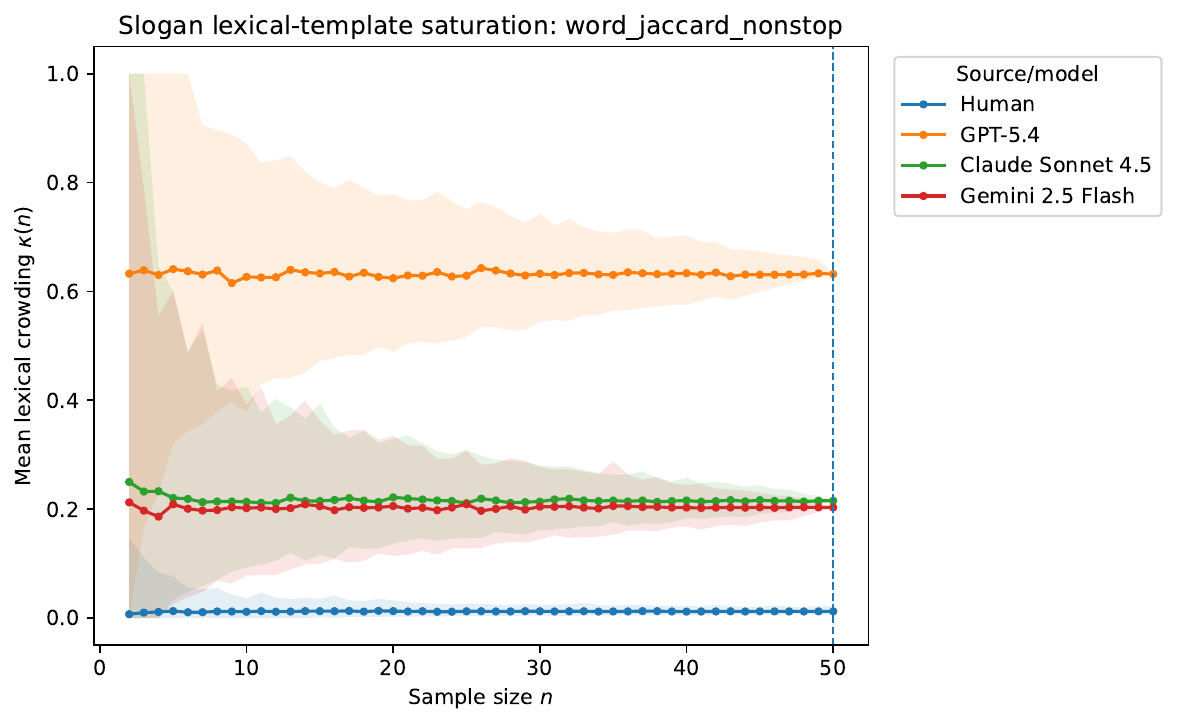}
\caption{
\textbf{Rarefaction curve for slogan non-stopword Jaccard crowding.}
Curves show mean lexical crowding $\widehat{\kappa}(n)$ as a function of sampled slogans $n$. The dashed vertical line marks $n=50$.
}
\label{fig:slogan_word_rarefaction}
\end{figure}

\begin{figure}[h]
\centering
\includegraphics[width=0.78\linewidth]{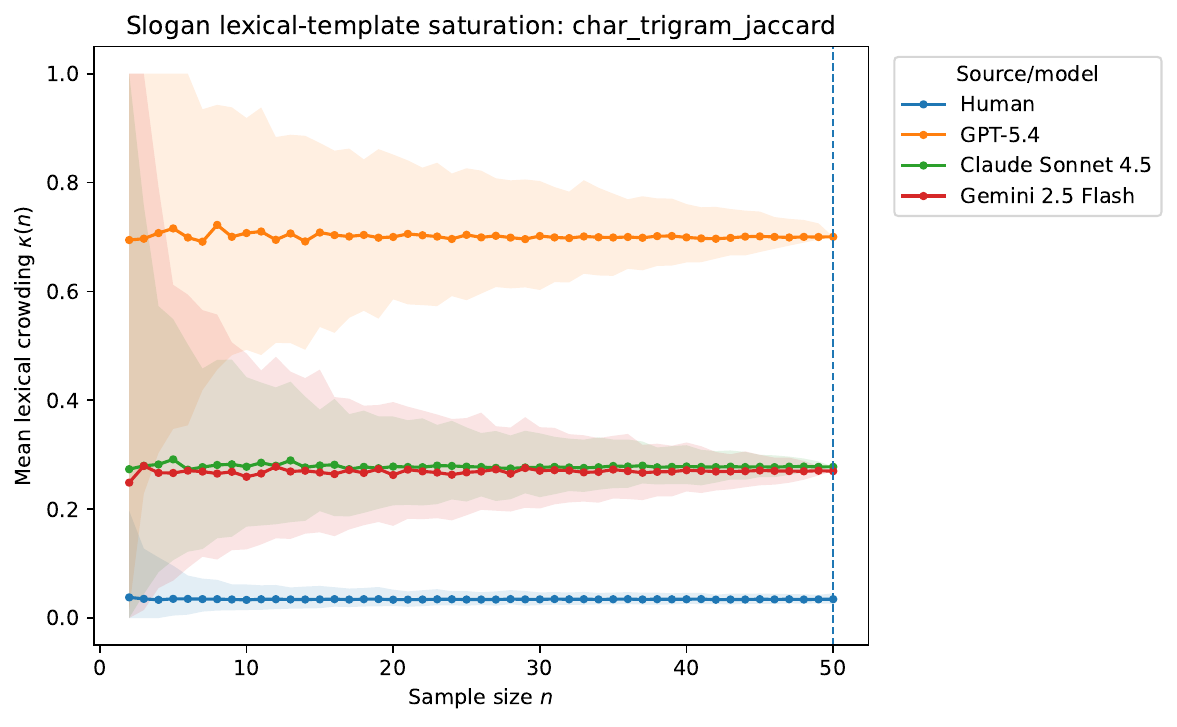}
\caption{
\textbf{Rarefaction curve for slogan character-trigram crowding.}
Curves show mean lexical crowding $\widehat{\kappa}(n)$ as a function of sampled slogans $n$. The dashed vertical line marks $n=50$.
}
\label{fig:slogan_trigram_rarefaction}
\end{figure}

\subsection{AUT concept-bucket kernel}
\label{app:aut_bucket_kernel}
The AUT concept-bucket kernel evaluates whether two responses express the same underlying use concept. Human and model responses are assigned to object-specific buckets, and the kernel equals one when two responses fall in the same bucket and zero otherwise. Thus, under this kernel, $\widehat{\kappa}$ is the probability that two sampled responses reuse the same concept.

\begin{figure}[h]
\centering
\includegraphics[width=0.78\linewidth]{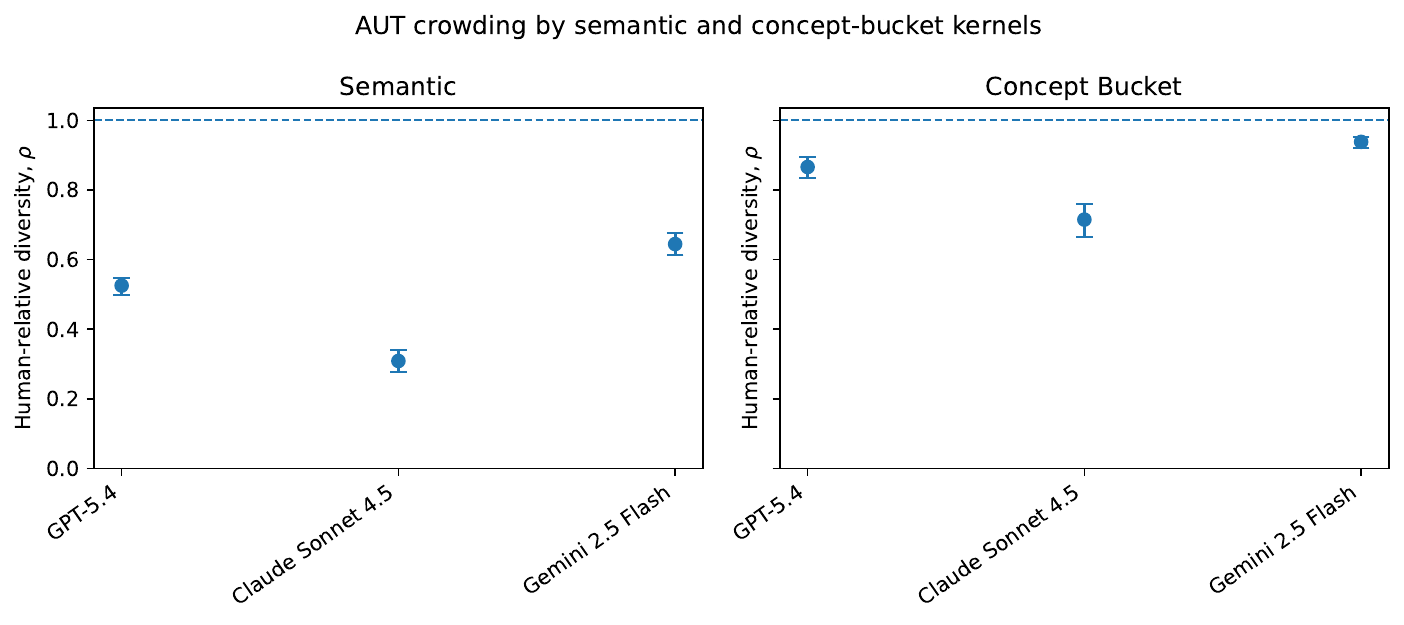}
\caption{
\textbf{AUT crowding under semantic and concept-bucket kernels.}
Points show task-family estimates of $\widehat{\rho}$ for each model; bars show 95\% bootstrap intervals. The dashed line marks $\rho=1$, the no-excess-crowding condition. All models remain below parity under the concept-bucket kernel, indicating excess concept reuse relative to the matched human baseline.
}
\label{fig:aut_bucket_kernel_comparison}
\end{figure}

Figure~\ref{fig:aut_bucket_kernel_comparison} compares the primary semantic kernel with the concept-bucket kernel. Under concept-bucket co-membership, all three models remain below parity: GPT-5.4 has $\widehat{\rho}=0.866$ with 95\% CI $[0.833,0.894]$, Claude Sonnet 4.5 has $\widehat{\rho}=0.715$ with 95\% CI $[0.665,0.759]$, and Gemini 2.5 Flash has $\widehat{\rho}=0.938$ with 95\% CI $[0.920,0.953]$. Figure~\ref{fig:aut_bucket_rarefaction} shows the rarefaction curves.

\begin{figure}[h]
\centering
\includegraphics[width=0.78\linewidth]{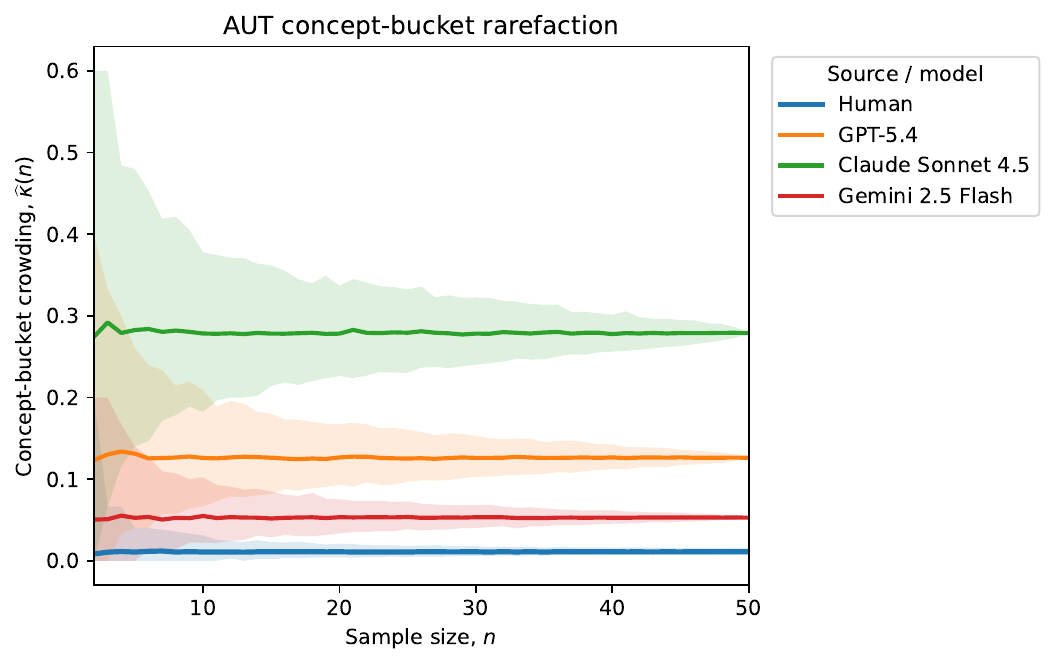}
\caption{
\textbf{Rarefaction curve for AUT concept-bucket crowding.}
Curves show mean concept-bucket crowding $\widehat{\kappa}(n)$ as a function of sampled responses $n$, aggregated across AUT objects. The dashed vertical line marks $n=50$. The curves assess whether the concept-level crowding estimates are stable with the available model-only sample size.
}
\label{fig:aut_bucket_rarefaction}
\end{figure}

\section{Protocol Diagnostics}
\label{app:protocol_diagnostics}
\subsection{Persona-mixture protocol comparison}
\label{app:persona_mix}
Table~\ref{tab:persona_mix_vs_main} reports the direct comparison between the neutral main protocol and persona-mixture prompting at $T=1.0$. Difference intervals are bootstrap intervals for $\widehat{\rho}^{persona}-\widehat{\rho}^{main}$.

\begin{table}[h]
\centering
\caption{
\textbf{Persona mixture versus neutral main prompting at fixed temperature.}
Both protocols use $T=1.0$. Positive $\Delta\widehat{\rho}$ indicates improved human-relative diversity under persona mixture.
}
\label{tab:persona_mix_vs_main}
\small
\begin{tabular}{llrrrr}
\toprule
Task & Model & Main $\widehat{\rho}$ & Persona $\widehat{\rho}$ & $\Delta\widehat{\rho}$ & 95\% CI for $\Delta\widehat{\rho}$ \\
\midrule
AUT & Claude Sonnet 4.5 & 0.309 & 0.739 & 0.430 & [0.388, 0.474] \\
AUT & GPT-5.4 & 0.526 & 0.507 & -0.018 & [-0.053, 0.017] \\
AUT & Gemini 2.5 Flash & 0.645 & 0.766 & 0.121 & [0.080, 0.164] \\
Slogans & Claude Sonnet 4.5 & 0.673 & 0.960 & 0.287 & [0.201, 0.382] \\
Slogans & GPT-5.4 & 0.179 & 0.927 & 0.749 & [0.650, 0.846] \\
Slogans & Gemini 2.5 Flash & 0.660 & 0.926 & 0.265 & [0.153, 0.391] \\
Stories & Claude Sonnet 4.5 & 0.486 & 0.699 & 0.213 & [0.145, 0.284] \\
Stories & GPT-5.4 & 0.372 & 0.616 & 0.244 & [0.189, 0.300] \\
Stories & Gemini 2.5 Flash & 0.446 & 0.724 & 0.278 & [0.212, 0.345] \\
\bottomrule
\end{tabular}
\end{table}

Figure~\ref{fig:persona_bcrit_actionability} shows how persona-mixture prompting lowers the critical private benefit required for rational adoption.

\begin{figure}[h]
\centering
\includegraphics[width=\linewidth]{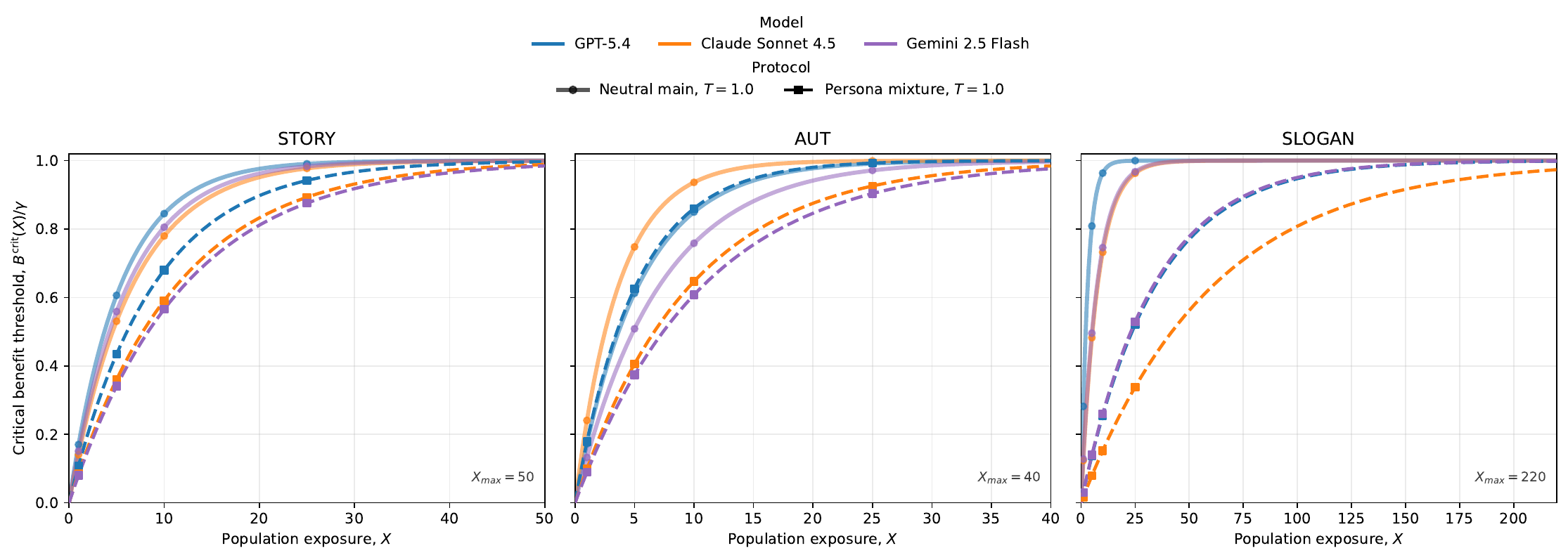}
\caption{
\textbf{Persona-mixture prompting lowers the critical private benefit required for rational adoption.}
Solid curves show the neutral main protocol at $T=1.0$; dashed curves show the persona-mixture protocol at $T=1.0$. Curves plot $B^{\mathrm{crit}}(X)/\gamma = 1-\exp(-X\widehat{\Delta})$. Lower curves indicate weaker congestion externalities.
}
\label{fig:persona_bcrit_actionability}
\end{figure}

\subsection{Temperature-grid monotonicity}
\label{app:temperature_grid}
Figure~\ref{fig:temperature_grid_rho} reports human-relative diversity over the tested temperature grid. The neutral $T=1.0$ point is downsampled to match the robustness sample size, making it comparable to the non-main temperature settings.

\begin{figure}[h]
\centering
\includegraphics[width=\linewidth]{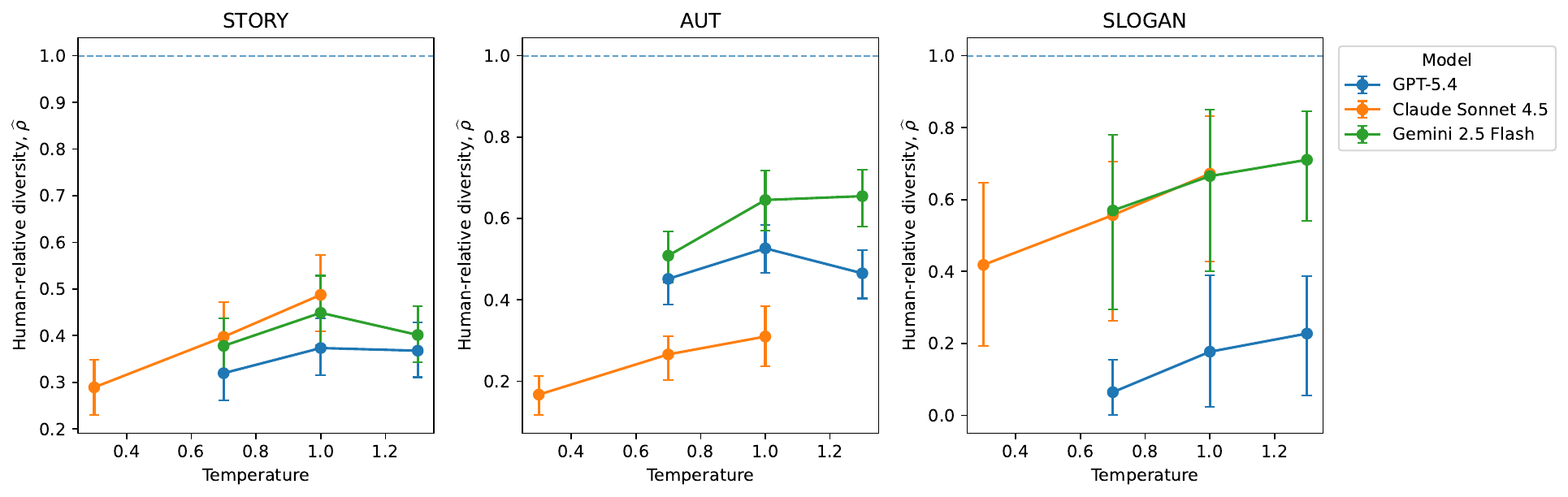}
\caption{
\textbf{Temperature-grid effects on human-relative diversity.}
Points show task-level $\widehat{\rho}$ under the primary semantic kernel across available temperatures. Error bars show 95\% bootstrap intervals. Higher temperature increases $\widehat{\rho}$ from the lowest to highest tested temperature in all nine model-task combinations, although strict monotonicity holds in six of nine.
}
\label{fig:temperature_grid_rho}
\end{figure}

Figure~\ref{fig:temperature_grid_delta} reports the corresponding effect on excess crowding.

\begin{figure}[h]
\centering
\includegraphics[width=\linewidth]{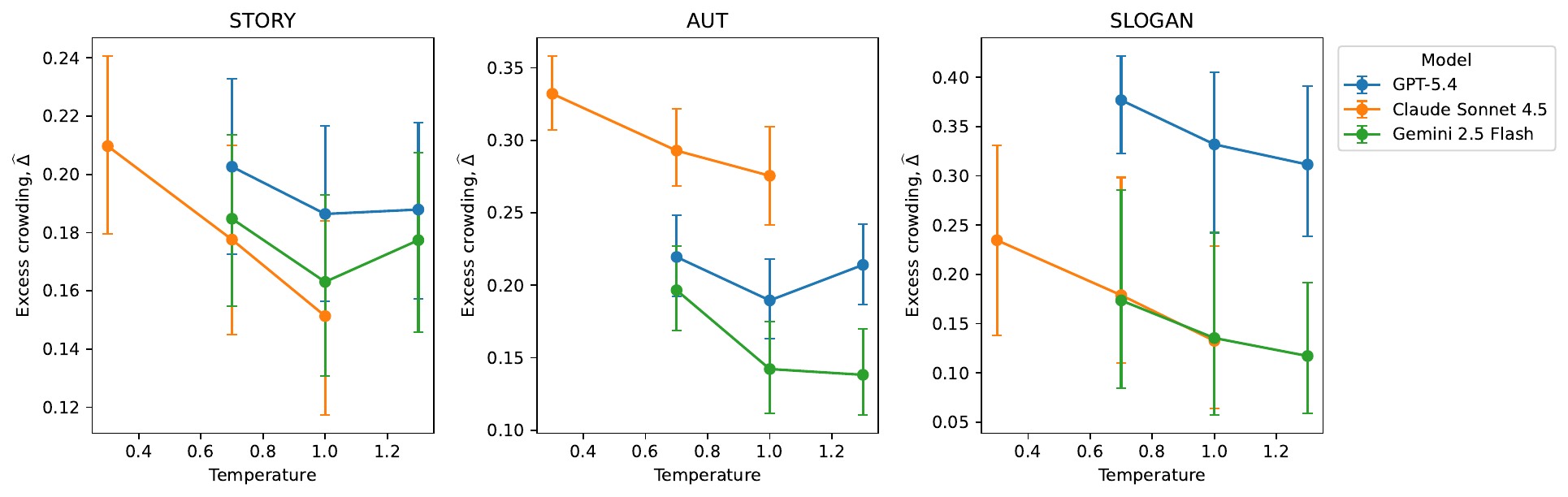}
\caption{
\textbf{Temperature-grid effects on excess crowding.}
Points show task-level $\widehat{\Delta}$ under the primary semantic kernel across available temperatures. Error bars show 95\% bootstrap intervals. Higher temperature decreases $\widehat{\Delta}$ from the lowest to highest tested temperature in all nine model-task combinations.
}
\label{fig:temperature_grid_delta}
\end{figure}

Table~\ref{tab:temperature_monotonicity} reports monotonic associations between temperature and crowding metrics. Because each model-task pair has only three temperature points, these tests are descriptive diagnostics rather than large-sample trend tests.

\begin{table}[h]
\centering
\caption{
\textbf{Temperature-grid monotonicity diagnostics.}
Positive $\rho_s(T,\widehat{\rho})$ indicates that higher temperature is associated with higher human-relative diversity. Negative $\rho_s(T,\widehat{\Delta})$ indicates that higher temperature is associated with lower excess crowding.
}
\label{tab:temperature_monotonicity}
\small
\begin{tabular}{llrrrr}
\toprule
Task & Model & $\Delta\widehat{\rho}$ & $\rho_s(T,\widehat{\rho})$ & $\Delta\widehat{\Delta}$ & $\rho_s(T,\widehat{\Delta})$ \\
\midrule
AUT & Claude Sonnet 4.5 & 0.143 & 1.0 & -0.057 & -1.0 \\
AUT & GPT-5.4 & 0.014 & 0.5 & -0.005 & -0.5 \\
AUT & Gemini 2.5 Flash & 0.146 & 1.0 & -0.058 & -1.0 \\
Slogans & Claude Sonnet 4.5 & 0.254 & 1.0 & -0.102 & -1.0 \\
Slogans & GPT-5.4 & 0.163 & 1.0 & -0.065 & -1.0 \\
Slogans & Gemini 2.5 Flash & 0.141 & 1.0 & -0.056 & -1.0 \\
Stories & Claude Sonnet 4.5 & 0.198 & 1.0 & -0.058 & -1.0 \\
Stories & GPT-5.4 & 0.048 & 0.5 & -0.015 & -0.5 \\
Stories & Gemini 2.5 Flash & 0.023 & 0.5 & -0.007 & -0.5 \\
\bottomrule
\end{tabular}
\end{table}

\subsection{Best observed protocol and critical-benefit curves}
\label{app:best_protocol}
The main protocol analysis isolates persona-mixture prompting at fixed $T=1.0$. As an additional diagnostic, we identify the best observed protocol among the tested temperature and persona-mixture settings for each model-task pair. Figure~\ref{fig:best_protocol_bcrit} compares the neutral main critical-benefit curve with the best observed protocol curve.

\begin{figure}[h]
\centering
\includegraphics[width=\linewidth]{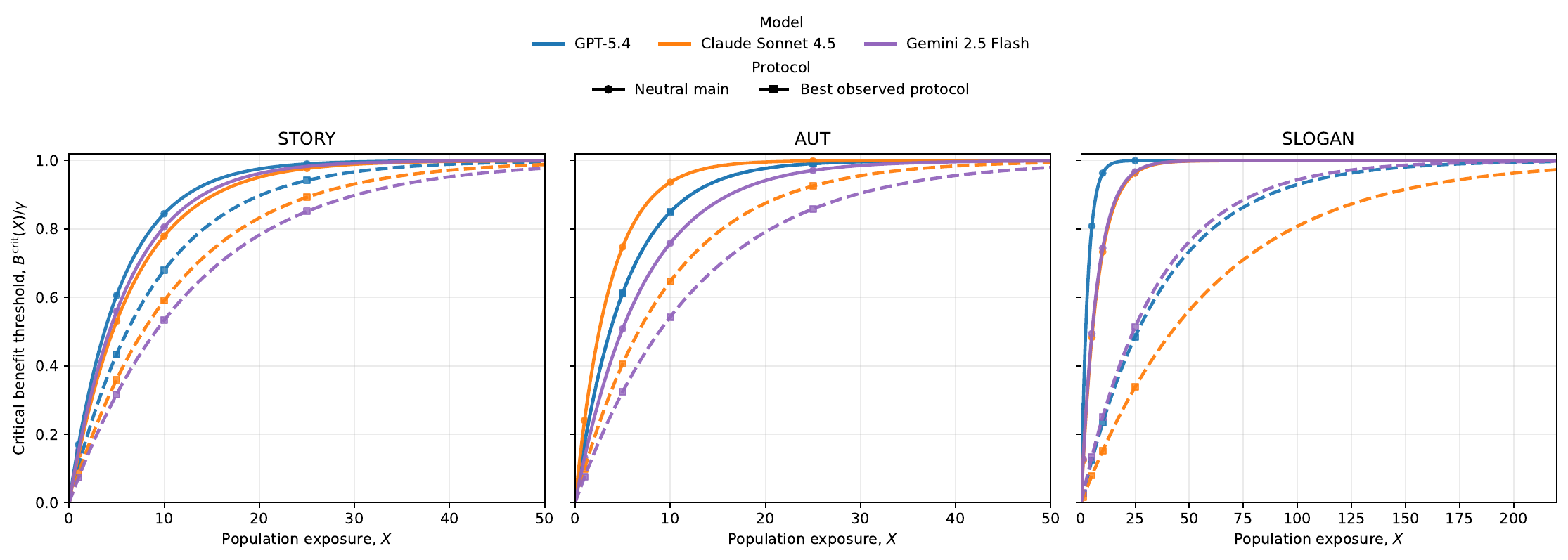}
\caption{
\textbf{Critical benefit curves under neutral main and best observed protocols.}
For each model-task pair, the solid curve shows the neutral main protocol and the dashed curve shows the best observed protocol among the tested temperature and persona-mixture settings. Curves plot $B^{\mathrm{crit}}(X)/\gamma=1-\exp(-X\widehat{\Delta})$. This figure is descriptive: it shows the best protocol found in the tested grid, not an optimized global protocol.
}
\label{fig:best_protocol_bcrit}
\end{figure}


\clearpage
\bibliographystyle{plain}
\bibliography{references}

\end{document}